\newtheorem{theorem}{Theorem}
\newtheorem{lemma}{Lemma}
\newtheorem{definition}{Definition}
\newtheorem{example}{Example}
\theoremstyle{nonumberplain}
\newtheorem{proof}{Proof}
\patchcmd{\@addmarginpar}{\ifodd\c@page}{\ifodd\c@page\@tempcnta\m@ne}{}{}
\tikzstyle{every picture}=[->,>=stealth',shorten >=1pt,auto,node distance=1.3cm,semithick]  
\tikzstyle{place}=[circle,thick,draw=black,minimum size=4mm]
\tikzstyle{invisible place}=[place,draw=none,fill=none]
\tikzstyle{transition}=[rectangle,inner ysep=1,thick,draw=black!75,fill=black!10,minimum size=2mm,,minimum width=4mm]
\tikzstyle{itransition}=[transition,draw=none,fill=none]
  \tikzstyle{every label}=[black]
\begin{document}
% The file aaai.sty is the style file for AAAI Press 
% proceedings, working notes, and technical reports.

\title{Situation Calculus for Synthesis of Manufacturing Controllers}
\author{
Giuseppe De Giacomo\\
Sapienza Universit\`{a} di Roma, Italy\\
\url{degiacomo@dis.uniroma1.it}
\And 
Paolo Felli\\
Free University of Bozen-Bolzano, Italy\\
\url{paolo.felli@unibz.it} 
\AND
Brian Logan\\
University of Nottingham, UK\\
\url{bsl@cs.nott.ac.uk} 
\And 
Fabio Patrizi\\
Sapienza Universit\`{a} di Roma, Italy\\
\url{patrizi@dis.uniroma1.it}
\And 
Sebastian Sardina\\
RMIT University, Australia\\
\url{sebastian.sardina@rmit.edu.au} 
}
\maketitle

%%%%%%%%%%%%%%%%%%%%
%%%%%%%%%%%%%%%%%%%%
%%%%%%%%%%%%%%%%%%%%
%%%%%%%%%%%%%%%%%%%%%%%%%%%%%%%%%
% CONFIGURATION OF FONTS
%%%%%%%%%%%%%%%%%%%%%%%%%%%%%%%%%

\newcommand{\mathname}[1]{\ensuremath{\text{\textit{#1}}}}
\newcommand{\textmath}[1]{\mathname{#1}}
\newcommand{\propername}[1]{\text{\textsf{\small #1}}\xspace}

\newcommand{\actionfont}[1]{\text{\textsc{#1}}}
\newcommand{\fluentfont}[1]{\mathname{#1}}

\newcommand{\astar}{\propername{A*}}
\newcommand{\JPS}{\propername{JPS}}
\newcommand{\HPA}{\propername{HPA*}}

\newcommand{\invAStar}{\propername{Inv-A*}}
\newcommand{\invJPS}{\propername{Inv-JPS}}
\newcommand{\invHPA}{\propername{Inv-HPA*}}

%%%%%%%%%%%%%%%%%%%%%%%%%%%%%%%%%
% GENERAL NOTATIONS
%%%%%%%%%%%%%%%%%%%%%%%%%%%%%%%%%

\newcommand{\compl}[1]{\overline{#1}}
\newcommand{\quotes}[1]{{\lq\lq #1\rq\rq}}
\newcommand{\Set}[1]{\left\{#1\right\}}
\newcommand{\bigmid}{\Big|}
\newcommand{\card}[1]{|{#1}|}                     % cardinality of a set
\newcommand{\Card}[1]{\left| #1\right|}
\newcommand{\cards}[1]{\sharp #1}
\newcommand{\sub}[1]{[#1]}
\newcommand{\config}[1]{\tuple{#1}}	% configuration
\newcommand{\set}[1]{\{#1\}}
\newcommand{\tup}[1]{\langle #1\rangle} 
% A symbol with something on top and under: \underoverset{under}{above}{text}
\newcommand{\underoverset}[3]{\underset{#1}{\overset{#2}{#3}}}

% Relations
\newcommand{\isdef}{\hbox{$\stackrel{\mbox{\tiny def}}{=}$}}

% Symbols
\newcommand{\powerset}{\mathbb{P}}
\newcommand{\NatN}{\Math{\mathbb{N}_0}} % naturals+0
\newcommand{\Nat}{\Math{\mathbb{N}}}
\newcommand{\mgu}{\modesf{mgu}}
\newcommand{\complexsub}{\modesf{cplex}}

% True and False
%\newcommand{\true}{\mathtt{true}}
%\newcommand{\false}{\mathtt{false}}
% \newcommand{\TRUE}{\uppercase{\true}}
% \newcommand{\FALSE}{\uppercase{\false}}

%%%%%%%%%%%%%%%%%%%%%%%%%%%%%%%%%%%%%%%%
%% LOGICAL FORMULAS
%%%%%%%%%%%%%%%%%%%%%%%%%%%%%%%%%%%%%%%%
\newcommand{\formulas}[1]{\mathbb{F}_#1}
\newcommand{\propositions}{\mathbb{P}}
\newcommand{\literals}[1]{\mathbb{L}_#1}
\newcommand{\FFF}{\mathbb{F}}
\newcommand{\PPP}{\mathbb{P}}
\newcommand{\NNN}{\mathbb{N}}
\newcommand{\LLL}{\mathbb{L}}
\newcommand{\true}{\mathsf{TRUE}}
\newcommand{\false}{\mathsf{FALSE}}
\newcommand{\True}{\mathtt{True}}
\newcommand{\False}{\mathtt{False}}
\newcommand{\T}{\mathtt{T}}
\newcommand{\F}{\mathtt{F}}
\newcommand{\sat}{\mathcal{SAT}}
\newcommand{\no}[1]{\neg #1}
\newcommand{\prop}{\mathname{Prop}} %extracts variables from the formula
% ADL
\newcommand{\Pre}{\textit{Pre}}
\newcommand{\Eff}{\textit{Eff}}
\newcommand{\eff}{\mathname{eff}} %effect
\newcommand{\Cond}{\textit{Cond}}

\newcommand{\result}{\mbox{\emph{next}}}
\newcommand{\resultExec}{\mbox{\emph{result}}}

\newcommand{\EventCandidates}{\mathname{EventCand}\xspace}
\newcommand{\ExtendNet}{\mathname{ExtNet}\xspace}
\newcommand{\ON}{\mathname{ON}\xspace}

%%%%%%%%%%%%%%%%%%%%%%%%%%%%%%%%%%%%%%%%
%% CALIGRAPHIC LETTERS
%%%%%%%%%%%%%%%%%%%%%%%%%%%%%%%%%%%%%%%%
\newcommand{\A}{\mathcal{A}} 
\newcommand{\B}{\mathcal{B}}
\newcommand{\C}{\mathcal{C}} 
\newcommand{\D}{\mathcal{D}}
\newcommand{\E}{\mathcal{E}} 
\newcommand{\G}{\mathcal{G}} 
\renewcommand{\H}{\mathcal{H}}
\newcommand{\I}{\mathcal{I}} 
\newcommand{\J}{\mathcal{J}}
\newcommand{\K}{\mathcal{K}} 
\renewcommand{\L}{\mathcal{L}}
\newcommand{\M}{\mathcal{M}} 
\newcommand{\N}{\mathcal{N}}
\renewcommand{\O}{\mathcal{O}} 
\renewcommand{\P}{\mathcal{P}}
\newcommand{\Q}{\mathcal{Q}} 
\newcommand{\R}{\mathcal{R}}
\renewcommand{\S}{\mathcal{S}} 
\newcommand{\U}{\mathcal{U}} 
\newcommand{\V}{\mathcal{V}}
\newcommand{\W}{\mathcal{W}} 
\newcommand{\X}{\mathcal{X}}
\newcommand{\Y}{\mathcal{Y}} 
\newcommand{\Z}{\mathcal{Z}}
\newcommand{\BE}{{\mathcal{B}, \mathcal{E}}}
\newcommand{\DB}{{\mathcal{DB}}}
\newcommand{\DS}{{\mathcal{DS}}}
\newcommand{\Int}{{\mathcal{I}}}
\newcommand{\Flts}{{\mathcal{F}}}

\newcommand{\adom}{{adom}}

%%%%%%%%%%%%%%%%%%%%%%%%%%%%%%%%%%%%%%%%
%% MACROS
%%%%%%%%%%%%%%%%%%%%%%%%%%%%%%%%%%%%%%%%
\newcommand{\Lsc}{\mathcal{L}_{\mathname{sitcalc}}}
\newcommand{\nullc}{\mathname{null}}
\newcommand{\Next}{\mathname{next}}
\newcommand{\Null}{\mathname{null}}
\newcommand{\Succ}{\mathname{succ}}
\newcommand{\Insert}{\mathname{Insert}}
\newcommand{\InsertNullB}{\mathname{InsertNull2}}
\newcommand{\DeleteB}{\mathname{Delete2}}
\newcommand{\doc}{\mathname{do}}

\newcommand{\ap}{\mathname{ap}}
\newcommand{\una}{\mathname{una}}
\newcommand{\ssa}{\mathname{ssa}}

\newcommand{\concat}{\cdot}
\newcommand{\goto}[1]{\stackrel{#1}{\longrightarrow}}
\newcommand{\tuple}[1]{\tup{#1}}            % tuple

%%%%%%% ENUMERATION
\newcommand{\myi}{\emph{(i)}\xspace}
\newcommand{\myii}{\emph{(ii)}\xspace}
\newcommand{\myiii}{\emph{(iii)}\xspace}
\newcommand{\myiv}{\emph{(iv)}\xspace}
\newcommand{\myv}{\emph{(v)}\xspace}
\newcommand{\myvi}{\emph{(vi)}\xspace}

\newcommand{\defterm}[1]{\ul{\textit{#1}}}	% requires soul package for \ul{} underlying

%%%%%%%%%%%%%%%%%%%%%%%%%%%%%%%%%%%%%%%%%%%%%%%%%%%%%%%%%%%%%%%%%%%%%%%%
%%%%% 	PETRI NET UNFOLDING
%%%%%%%%%%%%%%%%%%%%%%%%%%%%%%%%%%%%%%%%%%%%%%%%%%%%%%%%%%%%%%%%%%%%%%%%
% \newcommand{\pnet}{\mathname{pnet}}
\newcommand{\pnet}{\textsf{\small pnet}}
\newcommand{\readarcs}{\textsf{\small persprec}}

\newcommand{\liftRight}{\mathname{liftRight}}
\newcommand{\liftLeft}{\mathname{liftLeft}}

\newcommand{\rank}{\mathname{rank}}
\newcommand{\Unf}{\mathname{Unf}}
\newcommand{\df}{DF}
\newcommand{\dt}{DT}
\newcommand{\rf}{RF}
\newcommand{\rt}{RT}

\newcommand{\marking}[1]{\mbox{\textsf{\small Mark}}(#1)}

\newcommand{\ERVFLY}{\textsf{\small  ERV-Fly}}
\newcommand{\ERVFLYP}{\Math{\mbox{\textsf{\small ERV-Fly}}_{min}}}

\newcommand{\cst}[1]{#1}
\newcommand{\nat}{\mathbb{N}}

\newcommand{\prepl}[1]{{}^{\bullet}{#1}}
\newcommand{\postpl}[1]{{#1}^{\bullet}}

\newcommand{\Ctrl}{\textsc{Ctrl}}
\newcommand{\doingx}[1]{\Math{\textit{doing}_{#1}}}
\newcommand{\dox}[1]{\Math{\textit{do}_{#1}}}
\newcommand{\finx}[1]{\Math{\textit{fin}_{#1}}}
\newcommand{\start}[1]{\Math{\textit{start}_{#1}}}
\newcommand{\complete}[1]{\Math{\textit{complete}_{#1}}}

%%%%%%%%%%%%%%%%%%%%%%%%%%%%%%%%%%%%%%%%
%% PETRI NETS
%%%%%%%%%%%%%%%%%%%%%%%%%%%%%%%%%%%%%%%%

\newcommand{\CA}{\mathname{CA}}
\newcommand{\RA}{\mathname{RA}}
\newcommand{\FF}{\mathname{F}}
\newcommand{\RR}{\mathname{R}}
\newcommand{\GG}{\mathname{G}}
\newcommand{\Aset}{\mathname{A}}
\newcommand{\Pset}{\mathname{P}}
\newcommand{\Tset}{\mathname{T}}
\newcommand{\Vset}{\mathname{V}}

\newcommand{\Var}{\mathname{Var}}
\newcommand{\val}{\mathname{val}}
\newcommand{\expr}{\mathtt{expr}}
\newcommand{\net}{\mathcal{N}}

\newcommand{\marked}[1]{\overline{#1}}

\newcommand{\enabled}[2]{#1 [#2\rangle}
\newcommand{\allreach}[1]{[#1\rangle}
\newcommand{\fire}[4]{\enabled{#1}{#2}{#3}#4}
\newcommand{\guard}[1]{[\![#1]\!]}

\newcommand{\col}[1]{\ensuremath{\textsc{#1}}\xspace} %color name

\tikzstyle{place}=[circle,thick,draw=black,fill=white,minimum size=5mm,font=\fontsize{9}{144}\selectfont]
%\tikzstyle{transition}=[rectangle,thick,draw=violet!75,fill=violet!20,minimum size=7mm]
\tikzstyle{transition}=[rectangle,thick,draw=black,fill=gray!20,minimum size=5mm]
\tikzstyle{enabledtransition}=[rectangle,very thick,draw=green!75,fill=green!20,minimum size=5mm]
 \tikzstyle{container}=[rectangle,rounded corners,very thick,draw=black!75,fill=black!20,minimum height=7mm,minimum width=14mm]

%%%%%%%%%%%%%%%%%%%%%%%%%%%%%
%MANUF TASKS

\newcommand{\equip}{\taskfont{equip}}
\newcommand{\equipx}{\taskfont{equip\_x}}
\newcommand{\tinsert}{\taskfont{insert}}
\newcommand{\applyglue}{\taskfont{apply\_glue}}
\newcommand{\attach}{\taskfont{attach}}
\newcommand{\equipd}{\taskfont{eqp\_d}}
\newcommand{\equipr}{\taskfont{eqp\_r}}
\newcommand{\equipeng}{\taskfont{eqp\_eng}}
\newcommand{\equipg}{\taskfont{eqp\_g}}
\newcommand{\equipf}{\taskfont{eqp\_f}}
\newcommand{\equiph}{\taskfont{eqp\_h}}
\newcommand{\pressure}{\taskfont{app\_press}}
\newcommand{\pressureh}{\taskfont{apply\_press\_hollow}}
\newcommand{\pressuref}{\taskfont{apply\_press\_flat}}
\newcommand{\pressurex}{\taskfont{apply\_press\_x}}
\newcommand{\unequip}{\taskfont{unequip}}
\newcommand{\rivet}{\taskfont{rivet}}
\newcommand{\reload}{\taskfont{reload}}
\newcommand{\tremove}{\taskfont{remove}}
\newcommand{\vtest}{\taskfont{visual\_test}}
\newcommand{\ftest}{\taskfont{force\_test}}
\newcommand{\hold}{\taskfont{hold}}
\newcommand{\reaming}{\taskfont{reaming}}
\newcommand{\holdplace}{\taskfont{hold\_in\_place}}
\newcommand{\place}{\taskfont{place}}
\newcommand{\talign}{\taskfont{align}}
\newcommand{\drill}{\taskfont{drill}}
\newcommand{\machinedrill}{\taskfont{machine\_drill}}
\newcommand{\robotdrill}{\taskfont{robot\_drill}}
\newcommand{\cellstore}{\taskfont{out\_cell}}
\newcommand{\store}{\taskfont{store}}
\newcommand{\position}{\taskfont{position}}
\newcommand{\cellloads}{\taskfont{in\_cell}}
\newcommand{\loads}{\taskfont{load}}
\newcommand{\separate}{\taskfont{separate}}
\newcommand{\forcepull}{\taskfont{force\_pull}}
\newcommand{\loadsmaterial}{\taskfont{load\_material}}
\newcommand{\loadh}{\taskfont{load\_hv}}
\newcommand{\enter}{\taskfont{enter}}
\newcommand{\exit}{\taskfont{exit}}
\newcommand{\screw}{\taskfont{screw}}
\newcommand{\inspect}{\taskfont{inspect}}
\newcommand{\clamp}{\taskfont{clamp}}
\newcommand{\mountbit}{\taskfont{mount\_bit}}
\newcommand{\operatemachine}{\taskfont{operate\_machine}}
\newcommand{\engrave}{\taskfont{engrave}}
\newcommand{\fasten}{\taskfont{fasten}}
\newcommand{\unfasten}{\taskfont{unfasten}}
\newcommand{\rotate}{\taskfont{rotate}}
\newcommand{\spray}{\taskfont{spray}}
\newcommand{\ProdRes}{\R}

\newcommand{\videocapture}{\taskfont{video\_capture}}

\newcommand{\nname}[1]{\small{\textbf{#1}}}

\newcommand{\nop}{\taskfont{nop}}

\newcommand{\taskin}{in}
\newcommand{\taskout}{out}

%%%%%%%%%%%%%%%%%%%%%%%%%%%%%%%%%%%%%%%%%%%%%%%%%%%%
% Proper names
%%%%%%%%%%%%%%%%%%%%%%%%%%%%%%%%%%%%%%%%%%%%%%%%%%%%
\newcommand{\Golog}{\propername{Golog}}
\newcommand{\GologSpeak}{\propername{GologSpeak}}
\newcommand{\DGolog}{\propername{DGolog}}
\newcommand{\sGolog}{\propername{sGolog}}
\newcommand{\ConGolog}{\propername{ConGolog}}
\newcommand{\IndiGolog}{\propername{IndiGolog}}
\newcommand{\LeGolog}{\propername{LeGolog}}
\newcommand{\DTGolog}{\propername{DTGolog}}
\newcommand{\Prolog}{\propername{Prolog}}
\newcommand{\AgentSpeak}{\propername{AgentSpeak}}
\newcommand{\JASON}{\propername{Jason}}
\newcommand{\CANMINUS}{\propername{\CAN$^{\A}$}}
\newcommand{\CANMINUST}{\propernametiny{Can$^{\cal C}$}}
\newcommand{\CAN}{\propername{CAN}}
\newcommand{\CANT}{\propernametiny{Can}}
\newcommand{\CANPLAN}{\propername{CANPlan}}
\newcommand{\CANPLANT}{\propernametiny{CanPlan}}
\newcommand{\CANPLANII}{\propername{CanPlan2}}
\newcommand{\CANPLANOR}{\propername{Can(Plan)}}
\newcommand{\CANGOAL}{\propername{CanGoal}}
\newcommand{\JACK}{\propername{Jack}}
\newcommand{\JACKTM}{\propername{Jack\texttrademark}}
\newcommand{\JAM}{\propername{JAM}}
\newcommand{\PRS}{\propername{PRS}}
\newcommand{\SPARK}{\propername{SPARK}}
\newcommand{\RAP}{\propername{Rap}}
\newcommand{\dMARS}{\propername{dMARS}}
\newcommand{\TAPL}{\propername{3APL}}
\newcommand{\DAPL}{\propername{2APL}}
\newcommand{\GOALBDI}{\propername{GOAL}}
\newcommand{\JSHOP}{\propername{JSHOP}}
\newcommand{\JSHOPII}{\propername{JSHOP2}}
\newcommand{\ASHOP}{\propername{A-SHOP}}
\newcommand{\SHOP}{\propername{SHOP}}
\newcommand{\SHOPII}{\propername{SHOP2}}
\newcommand{\ACT}{\propername{ACT}}
\newcommand{\SIPEII}{\propername{SIPE-2}}
\newcommand{\OPLANII}{\propername{O-PLAN2}}
\newcommand{\Retsina}{\propername{Retsina}}
\newcommand{\IPEM}{\propername{IPEM}}
\newcommand{\SAGE}{\propername{Sage}}
\newcommand{\DECAF}{\propername{Decaf}}
\newcommand{\PROPICE}{\propername{Propice-Plan}}
\newcommand{\CYPRESS}{\propername{Cypress}}
\newcommand{\CPEF}{\propername{CPEF}}
\newcommand{\JADEX}{\propername{JADEX}}
\newcommand{\IMPACT}{\propername{IMPACT}}
\newcommand{\PDT}{\propername{PDT}}

%%%%%%%%%%%%%%%%%%%%%%%%%%%%%%%%%%%%%%%%%%%%%%%%%%%%
% Situation Calculus macros
%%%%%%%%%%%%%%%%%%%%%%%%%%%%%%%%%%%%%%%%%%%%%%%%%%%%

% Sitcalc Golog/ConGolog/IndiGolog programs
\newcommand{\mif}{\mbox{\bf if}}
\newcommand{\mwhile}{\mbox{\bf while}}
\newcommand{\mreturn}{\mbox{\bf return}}
\newcommand{\mthen}{\mbox{\bf then}}
\newcommand{\melse}{\mbox{\bf else}}
\newcommand{\mdo}{\mbox{\bf do}}
\newcommand{\mnoOp}{\mbox{\bf $noOp$}}
\newcommand{\mproc}{\mbox{\bf proc}}
\newcommand{\mend}{\mbox{\bf end}}
\newcommand{\mendproc}{\mbox{\bf endProc}}
\newcommand{\mendif}{\mbox{\bf endIf}}
\newcommand{\mendwhile}{\mbox{\bf endWhile}}
\newcommand{\mendfor}{\mbox{\bf endFor}}
\newcommand{\mfor}{\mbox{\bf for}}
\def\prparallel{\mathrel{\rangle\!\rangle}}
\def\supparallel{\mathord{|\!|}}
\newcommand{\ndet}{\mid}
\newcommand{\choice}[1]{\pi{#1}.}
\newcommand{\conc}{\mbox{$\parallel$}}
\newcommand{\tconc}{\mbox{$\space|||\space$}}
\newcommand{\pconc}{\mbox{$\prparallel$}}
\newcommand{\search}{\mbox{$\Sigma$}}
\newcommand{\searchO}{\mbox{$\Sigma_o$}}
\newcommand{\searchOM}{\mbox{$\Sigma_o^M$}}
\newcommand{\searchCR}{\mbox{$\Sigma_{cr}$}}
\newcommand{\searchR}{\mbox{$\Sigma_{r}$}}
\newcommand{\searchM}{\mbox{$\Sigma^M$}}
\newcommand{\searchC}{\mbox{$\Sigma_c$}}
\newcommand{\searchCM}{\mbox{$\searchC^M$}}
\newcommand{\searchCB}{\mbox{$\Sigma_{cb}$}}
\newcommand{\searchD}{\mbox{$\Delta$}}
\newcommand{\searchE}{\mbox{$\Delta_e$}}
\newcommand{\searchEM}{\mbox{$\Delta_e^M$}}
\newcommand{\searchL}{\mbox{$\Delta_l$}}
\newcommand{\searchER}{\mbox{$\Delta_r$}}
\newcommand{\searchERM}{\mbox{$\Delta_r^M$}}
\newcommand{\mnt}{\mbox{$mnt$}}

%%% Knowledge in sitcalc
\newcommand{\Know}{\mbox{\bf Know}}
\newcommand{\KWhether}{\mbox{\bf KWhether}}
\newcommand{\Kref}{\mbox{\bf KRef}}
\newcommand{\nows}{{\hbox{\small\sf now}}}
\newcommand{\now}{{\mbox{\sf now}}}

% IndiGolog macros
\newcommand{\Sensed}{\textmath{Sensed}}
\newcommand{\hend}{\textmath{end}}
\newcommand{\Trans}{\textmath{Trans}}
\newcommand{\Final}{\textmath{Final}}
\newcommand{\Poss}{\textmath{Poss}}
\newcommand{\Transobs}{\textmath{TransObs}}

\DeclareDocumentCommand{\ext}{ m O{\TS} O{v} }{%
        (#1)^{#2}_{#3}
}

\newcommand{\BAT}[1][]{\mathcal{D}_{{#1}}\xspace}
\newcommand{\BATT}{\BAT[\pedixT]}
\newcommand{\BATS}{\BAT[\pedixS]}

\newcommand{\confT}{\textsc{c}_{\pedixT}}
\newcommand{\confS}{\textsc{c}_{\pedixS}}
\newcommand{\Confs}[1]{\textsc{C}_{#1}}

\newcommand{\fluent}{\mathname{f}}
\newcommand{\Fluents}[1][]{\mathcal{F}_{#1}}

\renewcommand{\true}{\textsf{\smaller[1]true}}
\newcommand{\taskfont}[1]{\smaller[.8]\actionfont{#1}} %%%% TASKFONT
\newcommand{\const}[1]{\texttt{\smaller[1]#1}}

\renewcommand{\do}{\mathname{do}}
\newcommand{\Do}{\mathname{Do}}
\newcommand{\pa}{\conc}

\newcommand{\progTinit}{\progT^0}
\newcommand{\progSinit}{\progS^0}

\newcommand{\pedixS}{\textsc{s}}
\newcommand{\pedixT}{\textsc{t}}

\newcommand{\progT}{\prog_\pedixT}
\newcommand{\progS}{\prog_\pedixS}

\newcommand{\prog}{\delta}

\newcommand{\sit}{s}
\newcommand{\sitT}{s_\pedixT}
\newcommand{\sitS}{s_\pedixS}
\newcommand{\SitT}{S_\pedixT}
\newcommand{\SitS}{S_\pedixS}

\newcommand{\xx}{\vec{x}}
\newcommand{\xxT}[1][]{\vec{x}_{\pedixT#1}}
\newcommand{\xxS}[1][]{\vec{x}_{\pedixS#1}}

\newcommand{\yy}{\vec{y}}
\newcommand{\zz}{\vec{z}_{\pedixS}}

\renewcommand{\vec}[1]{\mathbf{#1}}

\newcommand{\turn}{\vartheta}
\newcommand{\turnT}{\pedixT}
\newcommand{\turnS}{\pedixS}

\newcommand{\PhiGA}{\Phi_{GA}}
\newcommand{\phiWinOLD}{{\Win}_{\Phi_{Sim}}}
\DeclareDocumentCommand{\phiWin}{ o }{%
        \IfNoValueTF{#1}%
            {\Win(\Phi_{Sim})}%	
            {\Win^{#1}(\Phi_{Sim})}%
}
\DeclareDocumentCommand{\phiWinPrime}{ o }{%
        \IfNoValueTF{#1}%
            {\Win(\Phi'_{Sim})}%	
            {\Win^{#1}(\Phi'_{Sim})}%
}

\newcommand{\gfp}{\textsc{gfp}\xspace}
\newcommand{\lfp}{\textsc{lfp}\xspace}

\newcommand{\Res}[1]{\R_{#1}}

\DeclareDocumentCommand{\pred}{ m o o }{%
        \mathname{#1}%
        \IfNoValueTF{#2}%
            {}%
        {    
        	{(\mathname{\smaller[.8]$#2$}}%
        	\IfNoValueTF{#3}%
            	{)}%
            	{,#3)}
        }%
}
\DeclareDocumentCommand{\action}{ m o }{%
        \actionfont{#1}%
        \IfNoValueTF{#2}%
            {}%	
            {(\mathname{\smaller[.8]$#2$})}%
}

%actions

\newcommand{\actTermV}{a} %ground basic actions terms
\newcommand{\ActTermV}{\boldsymbol a} %generic ground actions terms, possibly with tconc

\newcommand{\actTerm}{\alpha} %ground basic actions terms
\newcommand{\ActTerm}{\boldsymbol\alpha} %generic ground actions terms, possibly with tconc
\newcommand{\Act}{A} %action - for A(xx)
\newcommand{\act}{a} %basic action - for a(xx)

\newcommand{\TS}{\mathcal{T}}
\newcommand{\finite}[1]{{#1}^\mathname{f}}

\newcommand{\ConfsT}{C_{\pedixT}}
\newcommand{\ConfsS}{C_{\pedixS}}

\newcommand{\TransObs}{\mathname{TransObs}}
\newcommand{\FinalObs}{\mathname{FinalObs}}
\newcommand{\Obs}{\mathname{Obs}}

\newcommand{\Win}{\textit{Win}}

\newcommand{\strat}{\varsigma}
\newcommand{\fstrat}{\finite{\varsigma}}
\newcommand{\contr}{\rho}

%%%%%%%%%%%%%%%%%%%%
%%%%%%%%%%%%%%%%%%%%
%%%%%%%%%%%%%%%%%%%%
\begin{abstract}
Manufacturing is transitioning from a mass production model to a manufacturing as a service model in which manufacturing facilities `bid' to produce products. To decide whether to bid for a complex, previously unseen product, a manufacturing facility must be able to synthesize, `on the fly', a process plan controller that delegates abstract manufacturing tasks in the supplied process recipe to the appropriate manufacturing resources, e.g., CNC machines, robots etc. Previous work in applying AI behaviour composition to synthesize process plan controllers has considered only finite state ad-hoc representations.  Here, we study the problem in the relational setting of the Situation Calculus. By taking advantage of recent work on abstraction in the Situation Calculus, process recipes and available resources are represented by ConGolog programs over, respectively, an abstract and a concrete action theory.  This allows us to capture the problem in a formal, general framework, and show decidability for the case of bounded action theories. We also provide techniques for actually synthesizing the controller.
\end{abstract}
%!TEX root = main.tex

%%%%%%%%%%%%%%%%%%%%%%%%%%%%%%%%%%%%%%%%%%%%%%%%%%%
\section{Introduction}\label{sec:introduction}
%%%%%%%%%%%%%%%%%%%%%%%%%%%%%%%%%%%%%%%%%%%%%%%%%%%

%\todo{previous text moved to appendix}
%Manufacturing is transitioning from a mass production model to manufacturing-as-a-service model in which manufacturing facilities `bid' to produce products. To decide whether to bid for a complex, previously unseen product, a manufacturing facility must be able to synthesize, `on the fly', a process plan controller that delegates abstract manufacturing tasks in the supplied process recipe to the appropriate manufacturing resources, e.g., CNC machines, robots etc. 

Manufacturing is transitioning from a mass production model to a service model in which manufacturing facilities `bid' to produce products. 
In contrast to mass production, where large volumes of known products are produced at a time, in manufacturing as a service, the products to be manufactured are not known in advance, batch sizes are often small, and a facility may produce products for several customers at the same time \cite{TSB:12a,Rhodes:15a}. This trend towards rapid provisioning of manufacturing resources,  e.g., CNC machines and robots, with minimal management effort or service provider interaction has been termed `cloud manufacturing' \cite{Xu:12a,Lu//:14a}.

To determine if a novel product can be manufactured, abstract manufacturing tasks in the \emph{process recipe} specifying how a product is manufactured must be matched against the available manufacturing resources so as to produce a \emph{process plan} detailing the low-level tasks to be executed and their order, the manufacturing resources to be used, and how materials and parts move between resources \cite{Groover:07a}. Control software---called \emph{process plan controller}---that  delegates each operation in the plan to the appropriate manufacturing resources is then synthesized. 
In mass production, process planning is carried out by manufacturing engineers and is largely a manual process. However this is uneconomic for the small batch sizes typical of the manufacturing as a service model, and the time required to produce a plan is too great to allow facilities to bid for products in real time. To fully realize the manufacturing as a service vision, manufacturing facilities must be able to \emph{automatically synthesize process plan controllers} for novel products `on the fly'.

There has recently been efforts to apply AI behavior composition \cite{DPS-AIJ13} to the synthesis of process plan controllers, e.g., \cite{de-Silva//:16a,Felli//:16a,Felli//:17a,De-Giacomo//:18a}. 
However, this work suffers from two important limitations. 
First, the approaches are restricted to \emph{finite} state representations. 
While adequate for some manufacturing tasks and resources, the resulting discretization is unwieldy and less natural, for example, when representing and reasoning about the potentially infinite number of basic parts, each with a unique bar code or RFID tag. 
%\todo{the continous claim seems to me risky, we are not doing continous reasoning, are we? Can we say something about handling potentially infinite amount of pieces with different bar codes?}  
%%
Secondly, existing approaches to modeling abstract manufacturing tasks and the interactions between resources result in somewhat ad-hoc and inflexible formalisms. For example, the state (and its dynamics) of a given part (e.g., painted, defective, etc.) or of a shared resource (e.g., a conveyor belt) ought to be encoded into operational representations, such as transition systems or automata. Clearly, more declarative representations would be desirable. 

In this paper, we address both issues by adopting the Situation Calculus~\cite{Reiter:BOOK01}, a knowledge representation formalism to reason about action and change. 
We represent process recipes and available resources as high-level \ConGolog programs executing over, respectively, an abstract and a concrete action theory.  
This yields a principled, formal, and declarative representation of the manufacturing setting. 
We then define, by means of a suitable simulation relation, what it means to realize the a product process recipe on the available resources.
Finally, by leveraging recent work on abstraction, we show how to effectively check the existence of a process plan controller (and how to compute it) for the case of bounded action theories.

%!TEX root = main.tex

\section{Situation Calculus}

%In this section, we introduce the basic components of our approach. 
%\subsection{Concurrent Basic Action Theories}

The situation calculus \cite{McCarthy:MI69-SitCalc,Reiter:BOOK01} is a logical language for representing and reasoning about dynamically changing worlds in which all changes are the result of \textit{actions}. 
We assume to have a finite number of action types, each of which
takes a tuple of objects as arguments.
For example, $\action{\drill}(part,dmtr,speed,x,y,z)$ represents the (simple) action of drilling a hole of a certain diameter at a certain speed in a given part. 
In the manufacturing domain we are concerned with operations that may occur simultaneously. We therefore adopt the \emph{concurrent non-temporal} variant of the situation calculus~\cite[Chapter 7]{Reiter:BOOK01}, where a concurrent or compound action $\ActTermV$ is a, possibly infinite, set of of simple actions $a$ that execute simultaneously.
For example, $\set{\action{\rotate}(part,speed),\action{\spray}(part)}$ represents the joint execution of rotating a part at a given speed while spraying it.
%%bsl: 'that may occur' is implied by 'possible sequences'
Situations denote possible sequences of concurrent actions: the constant $S_0$ denotes the initial situation, on which we assume to have complete information, and the situation resulting from executing a concurrent action $\ActTermV$ in a situation $s$ is represented as situation term $\do(\ActTermV,s)$. 
Predicates whose value varies from situation to situation are called \emph{fluents}, and they take arguments of sort object plus a situation term as their last argument. For example, $\pred{painted(part,s)}$ may denote that a part is painted in situation $s$.
%%
%%We denote the set of fluents by $\Fluents$.

A \emph{basic action theory (BAT)}~\cite{PirriReiter:JACM99-ContributionsSC,Reiter:BOOK01} is a collection of axioms $\D$ describing the \emph{preconditions} and \emph{effects} (and non-effects) of actions on fluents.
A special predicate $\Poss(a,s)$ is used to state that the simple action $a$ is executable in situation $s$, and, for each simple action type, a precondition axiom is given to specify when the action can be legally performed. Such a predicate is extended to compound actions  $\Poss(\ActTermV,s)$, typically by requiring that each atomic action in $\ActTermV$ is possible, i.e., $\Poss(a,s)$, although one can further restrict $\Poss$ when needed. We also assume that $\Poss(\set{a},s)\equiv\Poss(a,s)$. 
%%
% Using $\Poss(a,s)$, one can define relation $\pred{Executable}(s)$ to state that situation $s$ can be legally executed, that is, each action in it can be executed according to its precondition axiom.
%%
A \emph{successor state axiom} is used to specify how each fluent changes as the result of executing (simple or) concurrent actions in the domain. Successor state axioms thus encode the causal laws of the domain being modelled, by encoding the effects of actions. 
Figures \ref{fig:resources_bats} and \ref{fig:sys_bat} list examples of precondition and successor state axioms for a manufacturing setting. 

%bsl: the following seems implicit in \D being a theory
%A typical reasoning task is the \emph{projection task}: would condition $\phi$ hold if we were to execute a particular sequence of actions in the domain?
%%%
%Such queries can be expressed as a logical entailment query relative to the underlying BAT $\D$ modeling the domain.
%%%
%Concretely, given a query formula $\phi(s)$, with variable $s$ being its only situation term, and a ground situation terms $S$ (i.e., a concrete sequence of actions from the initial state), the projection task involves answering whether $\D \models \phi(S)$.

%\subsection{ConGolog High-level Programs} \label{sec:congolog}

%bsl: I don't think we need the various versions of Golog?

\section{A Variant of ConGolog for Manufacturing} 
\emph{High-level programs} are used to specify complex processes in the domain.
%%
%They are ``high-level'' in that they are built from actions and tests of the domain of concern (rather than based on classical variables and assignment), and hence they are meant to be executed against a theory of action.
%%%
%In the situation calculus, several of these languages have been developed, such as \Golog\ \cite{Levesque:JLP97-Golog}, which
%includes usual structured constructs and constructs for nondeterministic choices, \ConGolog\ \cite{DeGiacomoLL:AIJ00-ConGolog}, which extends \Golog\ to accommodate concurrency, and \IndiGolog\ \cite{SardinaDGLL:AMAI04}, which provides means for interleaving planning and execution.
%
We specify programs in (recursion-free) \ConGolog\ \cite{DeGiacomoLL:AIJ00-ConGolog}:
\begin{quote}
$\ActTermV$           	\hfill  compound action\\
$\phi?$            	\hfill test for a condition\\
$\delta_1;\delta_2$        
					\hfill  sequence\\
$\delta_1 \ndet \delta_2$   
					\hfill  nondeterministic branch\\
$\pi x.\delta$     	\hfill  nondeterministic choice of argument\\
$\delta^*$         	\hfill  nondeterministic iteration\\
$\mif\ \phi\ \mthen\ \delta_1\ \melse\ \delta_2\ \mendif$
                	\hfill  conditional\\
$\mwhile\ \phi\ \mdo\ \delta\ \mendwhile$
                	\hfill  while loop\\
$\delta_1 \conc \delta_2$  
					\hfill interleaved concurrency 
%$\delta_1 \tconc \delta_2$  
%					\hfill true concurrency
\end{quote}
where $\ActTermV$ is a compound action (instead of atomic action as in the original paper) and $\phi$ is situation-suppressed formula, i.e., a formula in the language with all situation arguments in fluents suppressed. %%
We denote by $\phi[\sit]$ the situation calculus formula obtained from $\phi$ by restoring the situation argument $\sit$ into all fluents in $\phi$. 
%%bsl: the following seems to be a restriction on the syntax, so moved to here.
%\todo{do we need this in this paper? I guess we do, as otherwise they will ``disappear''? paolo: YES}
We require that the variable $x$ in programs of the form $\pi x.\delta$ range over objects, and occurs in some action term in $\delta$, i.e., $\pi x.\delta$ acts as a construct for the nondeterministic choice of action parameters. 
%Note that given that we have finitely many action types, wlog, we disallow  pick variables to range over actions directly.

The semantics of \ConGolog\ is specified in terms of single-steps, using the following two predicates \cite{DeGiacomoLL:AIJ00-ConGolog}: 
\begin{compactitem}
  \item $\Final(\prog,\sit)$: program $\delta$ may  terminate in situation $\sit$; and 
  \item $\Trans(\prog,\sit,\prog',\sit')$: one step of program $\prog$ in situation $\sit$ may lead to situation $\sit'$ with $\prog'$ remaining to be executed.
\end{compactitem}
The definitions of $\Trans$ and $\Final$ for the standard \ConGolog constructs are given by: 
\[
 \begin{array}{l}
\Final(\ActTermV,s) \equiv \False \\
\Final(\phi?,s) \equiv \phi[s] \\
\Final(\delta_1; \delta_2,s) \equiv 
        \Final(\delta_1,s) \land \Final(\delta_2,s)\\
\Final(\delta_1 | \delta_2,s) \equiv 
	\Final(\delta_1,s) \lor \Final(\delta_2,s) \\
\Final(\pi x.\delta, s) \equiv
	\exists x. \Final(\delta,s)\\
\Final(\delta^*, s) \equiv \True\\
\Final(\delta_1 \conc \delta_2,s) \equiv 
       \Final(\delta_1,s) \land \Final(\delta_2,s)
 \end{array}
\]
\[
 \begin{array}{l}
\Trans(\ActTermV,s,\delta',s') \equiv \\
\qquad\qquad
	s'=do(\ActTermV,s) \land \Poss(\ActTermV,s) \land \delta'=\True? \\
\Trans(\phi?,s,\delta',s') \equiv \False \\
\Trans(\delta_1; \delta_2,s,\delta',s') \equiv %\\
%\qquad\qquad
	\Trans(\delta_1,s,\delta_1',s') \land \delta'=\delta_1';\delta_2 \lor{}\\
\qquad\qquad   
		\Final(\delta_1,s) \land \Trans(\delta_2,s,\delta',s') \\
\Trans(\delta_1 \ndet \delta_2,s,\delta',s') \equiv{} \\
\qquad\qquad   
	\Trans(\delta_1,s,\delta',s') \lor \Trans(\delta_2,s,\delta',s') \\
\Trans(\pi x.\delta, s, \delta',s') \equiv 
	\exists x. \Trans(\delta,s,\delta',s')\\
\Trans(\delta^*, s, \delta',s') \equiv
	\Trans(\delta,s,\delta'',s')\land \delta'=\delta'';\delta^*\\
\Trans(\delta_1 \conc \delta_2,s,\delta',s') \equiv{} \\ 
\qquad\qquad
	\Trans(\delta_1,s,\delta_1',s')  \land \delta'=\delta_1'\conc\delta_2 \lor {}\\
\qquad\qquad
		\Trans(\delta_2,s,\delta_2',s')  \land \delta'=\delta_1\conc\delta_2'\\
 \end{array}
\]
%%bsl: to save space, focus on the semantics we actually use
%As in~\cite{Classen:KR08,SardinaG09}, the test construct $\phi?$ is a \emph{synchronous} version of the original test construct in~\cite{DeGiacomoLL:AIJ00-ConGolog}, so it does not yield any transition (it does not allow interleaving), but it is final when satisfied. 
% 
Note that the conditional and while-loop constructs are  definable: $\mif\ \phi\ \mthen\ \delta_1\ \melse\ \delta_2\ \mendif = \phi?; \delta_1 | \lnot\phi?; \delta_2$ and $\mwhile\ \phi\ \mdo\ \delta\ \mendwhile = (\phi?; \delta)^*;\lnot \phi?$.

%Several semantics are possible for the synchronized concurrency construct.
%%
\smallskip
In the manufacturing setting, high-level programs are used to model the logic of manufacturing resources in ``isolation'', and synchronized concurrency is needed to represent the operation of two (or more) resources ``simultaneously''. For this reason, we introduce a new construct in \ConGolog, called \emph{synchronized concurrency}:

\begin{quote}
$\delta_1 \tconc \delta_2$  
					\hfill synchronized concurrency
\end{quote}
that represents the synchronized concurrent execution of programs $\delta_1$ and $\delta_2$: their next corresponding actions  
take place in the same next transition step.  Its semantics is defined as follows: 
%%
%This implies that a synchronized concurrent execution of $\delta_1$ and $\delta_2$ should account for any kind of concurrency, including interleaved and joint execution of legal atomic actions:
%%
\[
 \begin{array}{l}
\Trans(\prog_1 \tconc \prog_2,s,\prog',s')\equiv\\
%\qquad\qquad
%	 \Trans(\prog_1 \conc \prog_2,s,\prog',s') \lor{}\\
% \qquad\qquad
% 	 \Trans(\prog_1,s,\prog'_1,s') \land \prog' = (\prog'_1\conc\prog_2) \lor{}\\
% \qquad\qquad
% 	 \Trans(\prog_2,s,\prog'_2,s') \land \prog' = (\prog_1\conc\prog'_2) \lor{}\\
\qquad\qquad
	[\Trans'(\prog_1,s,\prog'_1,s'_1) \land s'_1 = \do(\ActTermV_1,s) \land{}\\
\qquad\qquad	
	\Trans'(\prog_2,s,\prog'_2,s'_2) \land s'_2 = \do(\ActTermV_2,s) \land{}\\ 
 \qquad\qquad		
 	\Poss(\ActTermV_1 \cup \ActTermV_2 ,s) \land{}\\
\qquad\qquad		
	\prog' = (\prog'_1\tconc\prog'_2) \land s'=\do(\ActTermV_1 \cup \ActTermV_2,s)]
 \end{array}
\]

\noindent
where $\Trans'$ is equivalent to $\Trans$ except for the condition $\Trans(\ActTermV,s,\delta',s')$ which is now $\Trans'(\ActTermV,s,\delta',s') \equiv s'=do(\ActTermV,s) \land \delta'=\True?$, and for $\Trans'(\prog_1\tconc \prog_2,s,\delta',s')$ that is as above but without check of $\Poss$. 

The characterization of $\Final$ for synchronized concurrency is analogous to that of interleaved concurrency:
\[\Final(\prog_1 \tconc \prog_2,s)\equiv \Final(\prog_1,s)\land\Final(\prog_2,s).\]

Note the underlying assumption here is that a number of sub-systems, manufacturing resources in our setting, can legally perform a joint step if such joint step is deemed possible by the BAT.%iif they can perform their corresponding steps by themselves. Of course, the effect of such joint execution may be different than that resulting from an interleaved execution of such steps.

\section{Modelling Manufacturing as a Service}

In this section we show how the manufacturing as a service setting can be captured by representing both process recipes and manufacturing systems  as \ConGolog programs.  

\subsection{Resource Programs}

We consider a \emph{manufacturing system}, or simply system, composed of $n$ of distinct manufacturing resources, each identified by an index $i\in[1,n]$. Each resource is associated to a BAT $\BAT[i]$ specifying the fluents, the actions that may be performed, their preconditions and effects. 
For convenience we assume that actions have the resource index $i$ (a constant) as their last argument. 

%bsl: revised, but we could compress this further, e.g,. by making R3 the part-handing system, and R5 doesn't carry parts or apply glue
\begin{example}
Consider a manufacturing cell that performs operations on parts (the example is based on the cell described in \cite{Felli//:18a}). Parts have an ID, and parameters such as size, weight, material, etc. %, and, when in the cell, an $x$, $y$, $z$ position. (For simplicity, we only model the positions of parts inside the cell). 
%\todo{Is this what you mean by each part has its own system of coordinates? PP: no, xyz are for positions of holes or positioning with respect of a part's system of coordinates, which we do not handle explicitely.} 
%%
The cell consists of five resources. $\Res{1}$ is a robot that can perform different operations on parts within its operating envelope, by (autonomously) equipping with the appropriate end effector using the action $\action{\equip}[ee,\const{1}]$. By equipping a driller it can drill parts; by equipping a rivet gun it can apply rivets, etc. The drilling operation is modeled as the action $\action{\robotdrill}$ with arguments $part,bit,dmtr,speed,feed,x,y,z$ for the part ID, the drilling bit ID, the diameter, the spindle speed, the feed rate, and hole position. Fully specified actions are of the form $\action{\robotdrill}[\const{p},\const{bit1},\const{.7},\const{125},\const{.2},\const{123},\const{87},\const{12},\const{1}]$, with $\const{1}$ as last argument. Similarly for $\action{\rivet}[part,rivet\_type,x,y,z,\const{1}]$.

We model the other resources in a similar way. $\Res{2}$ is a fixture that can perform an action $\action{\holdplace}[part,force,\const{2}]$, with arguments for the part ID and the clamping force. 
Resource $\Res{3}$ is another robot that can move parts into and out of the cell from an external conveyor, position a part at a given location relative to another part, and, by equipping a flat or hollow end effector, apply pressure to a part that is being worked on by another resource (hollow for drilling or milling, flat for riveting). These operations correspond to the actions $\action{\cellloads}[part, weight, material, dimx, dimy, dimz,\const{3}]$, $\action{\cellstore}[part,code,\const{3}]$, $\action{\position}[part1,part2,x,y,z,\const{3}]$, $\action{\pressureh}[part,force,\const{3}]$, $\action{\pressuref}[part,force,\const{3}]$. Note that information about the weight, material and size of parts loaded into the cell is made available by passing this via arguments to $\action{\cellloads}$. 
Resource $\Res{4}$ is a upright drilling machine for drilling parts with high precision. Finally, $\Res{5}$ is a human operator, who operates $\Res{4}$ and who can also bring small parts into and out of the cell, and apply glue to parts with $\action{\applyglue}[part,glue\_type,\const{5}]$.
Parts are moved between resources by a part-handling system. For simplicity, we model this using additional actions $\action{\taskin}[part,i]$ and $\action{\taskout}[part,i]$ for each resource, denoting that a part is moved into or out of the work area of the resource, respectively. We also have a special action $\action{\nop}$ that specifies when a resource may remain idle. 

To denote that a part is currently in the work area of a resource $i$, that a hole has been drilled in a part by a resource, or that a part has a certain material, etc.\ we use situation dependent or independent fluents of the form $\pred{at}[part, i][s]$, $\pred{drilled}[part, hole, i]$, $\pred{material}[part, m]$, etc. 
In Figure~\ref{fig:resources_bats} we list some examples of precondition axioms for theories $\BAT[i]$. 
%%bsl: commented out to save space.
%It is easy to see that this can be extended to all the actions mentioned above as well as to enrich the description with additional details which we omit due to space limitations. For instance, we can include in the precondition axioms any additional checking of parameters, e.g., by means of initial-state axioms, that restrict their possible values. 
\end{example}

Given the BAT $\BAT[i]$ of a resource (which include the possible actions) we can describe all possible sequences of operations that the resource can execute (in isolation) as a \ConGolog program $\prog_i$ with BAT $\BAT[i]$.%\todo{This is a little unclear. We say we model resources as BATs, rather than programs. PP: no, given the BAT for the theory, we can write programs that use the action sort. made explicit}

\subsection{The Available Program}

The set of BATs $\set{\BAT[1],\ldots,\BAT[n]}$ is then compiled into a single BAT $\BATS$, for the entire system, in a semi-automated fashion, e.g., by taking into account knowledge about which resources are connected by the part-handling system, which resources can work on the same parts, etc. %their operating envelopes, etc. %Clearly, we could equivalently assume a collective unique $\BATS$, without any reference to individual theories of resources.
%%bsl: repetition?
%Specifically, in this paper we are interested in representing dynamic worlds that allow the concurrent execution of multiple actions, therefore we adopt the standard approach of the concurrent, non-timed situation calculus \cite{Reiter} and refer to a finite set of primitive action types as above, for each resource, on which 
%We define a set of possible \emph{compound} action types as the powerset of the set of primitive action types. 
To be able to represent dynamic worlds that allow the concurrent execution of multiple actions, we consider \emph{compound action} terms of the form $\Act=\set{\act_1,\ldots,\act_k}$ with $k\leq n$, where each $\act_i$ is a basic action term. %From now on we are going to use these compound action. 
As shorthand, we denote by $\Act(\xx)$ the compound action $\Act$ with a vector $\xx$ of arguments (of the right size, and assuming a standard ordering of basic actions).
%
%%bsl: I don't think this should be part of the example?
Moreover, to ensure that the resources involved can work on the same parts, we use a special situation-independent predicate $\pred{coopMatrix}[i,j]$ specifying that resource $i$ can cooperate with resource $j$. 
%%bsl: if what's described below is standard in concurrent sitcalc I would add a citation; otherwise I'd omit the text as it's implied by everything above.
%Crucially, we only allow compound actions that represent either \emph{joint} actions on \emph{the same parts} or passing of parts between resources, ruling out any compound action which does not correspond to any meaningful combination of basic actions. This is justified by the fact that, in our framework, true concurrency is used to capture complex joint actions executed by multiple resources, not as an artifice for minimising the length of process plans. 

\begin{example}
The situation $\do(\{\action{\robotdrill}[\const{f}, \cdots,\const{1}]$, $\action{\holdplace}[\const{f},\cdots,\const{2}]\},\sit)$ 
results from the concurrent execution of two actions: $\Res{1}$ drilling a part with ID $\const{f}$ that is held by the fixture $\Res{2}$. 
%Note that we do not adopt an interleaved semantics for compound actions, opting for `true concurrency': actions are executed at the \emph{same} time.  
%
Also, Figure~\ref{fig:sys_bat} shows a fragment of the resulting BAT $\BATS$ for the overall system. 
%Note that the individual preconditions must be met (as in \cite{Reiter}, where $\Poss(\ActTerm,s) \supset \forall \actTerm_i\in\ActTerm. ~\Poss(\actTerm,s)$). 
%%
\end{example}

Similarly to the case of a single resource, assuming a set of $n$ BATs for each of the available resources, and the resulting BAT $\BATS$ for the entire system, we can  capture all the possible executions of the system  as a \ConGolog program. 

\begin{definition}[Available Program]
Given a set of $n$ resource programs $\prog_i$, $i\in[1,n]$, the resulting \emph{available program} is the \ConGolog program is 
$\progSinit := \prog_1 \tconc\cdots\tconc \prog_n$
\end{definition}

\begin{figure*}[h]
\small
%$\BAT[\R_i,poss]$, $i\in [1,n]$:\\
$\Poss(\action{\robotdrill}[part,bit,dmtr,speed,feed,x,y,z,i],s) \equiv \pred{equipd}[\const{driller},i][s] \land \exists j.  \pred{at}[part,j][s] \land \pred{material}[part,m] \land \pred{ok}[bit,dmtr,m]$\\
$\Poss(\action{\machinedrill}[part,bit,dmtr,speed,feed,x,y,z,i],s) \equiv \pred{at}[part,i][s] \land \pred{material}[part,m] \land \pred{ok}[bit,dmtr,m] $\\
%$\Poss(\action{\loadsmaterial}[part,catalogue\_id,weight,x,y,z,i],s) \equiv \pred{on\_site}[catalogue\_id][s]$\\
%$\Poss(\action{\separate}[p1,p2,force,i],s) \equiv \pred{composite}[part,p2][s] \land \exists j.~  \pred{at}[part,j][s]$\\
$\Poss(\action{\pressureh}[part,force,i],s) \equiv \pred{equipd}[\const{pressure\_hollow},i][s] \land \exists j.~  \pred{at}[part,j][s] $\\
$\Poss(\action{\pressuref}[part,force,i],s) \equiv \pred{equipd}[\const{pressure\_flat},i][s] \land \exists j.~  \pred{at}[part,j][s] $\\
$\Poss(\action{\hold}[part,force,i],s) \equiv \pred{at}[part,i][s] \land (\pred{equipd}[\const{grip\_s},i][s] \land  \pred{shape}[part,\const{squared}] \lor \pred{equipd}[\const{grip\_r},i][s] \land  \pred{shape}[part,\const{round}])$\\
$\Poss(\action{\rivet}[part,rivet\_type,x,y,z,i],s) \equiv \pred{equipd}[\const{rivet\_gun},i][s] \land \pred{hole}[part,x,y,z][s] \land \pred{compr}[i,\const{charged}][s] \land \exists j.~  \pred{at}[part,j][s] $
%$\Poss(\action{\attach}[p1,p2,p3,i],s) \equiv \exists j.~  \pred{at}[p1,j][s] \land \exists k.~  \pred{at}[p2,k][s]$\\
%$\Poss(\action{\tinsert}[p1,p2,p3,i],s) \equiv \exists j.~  \pred{at}[p1,j][s] \land \exists k.~  \pred{at}[p2,k][s]$

\begin{subfigure}[b]{0.55\textwidth}
$\Poss(\action{\equip}[ee,i],s) \equiv \pred{has\_effector}[i,ee][s] \land \neg\exists e.~ \pred{equipd}[e,i][s]$\\
$\Poss(\action{\loads}[part,weight,material,x,y,z,i],s) \equiv \pred{on\_site}[part][s]$\\
$\Poss(\action{\cellloads}[part,weight,material,x,y,z,i],s) \equiv \pred{on\_site}[part][s]$\\
$\Poss(\action{\operatemachine}[j,i],s) \equiv \pred{resource\_type}[j,\const{machine\_tool}]$\\
$\Poss(\action{\position}[part,p2,x,y,z,i],s) \equiv \pred{at}[part,i][s] \land \exists j.~\pred{at}[p2,j][s] $\\
%$\Poss(\action{\fasten}[part,fastener\_type,i],s) \equiv \exists j.~  \pred{at}[part,j][s]$\\
%$\Poss(\action{\unfasten}[part,p2,i],s) \equiv \pred{composite}[part,p2][s] \land \exists j.~  \pred{at}[part,j][s]$\\
$\Poss(\action{\applyglue}[part,glue\_type,i],s) \equiv \exists j.~  \pred{at}[part,j][s]$
%$\Poss(\action{\nop}[i],s) \equiv \True$
\end{subfigure}% 
\begin{subfigure}[b]{0.45\textwidth}
$\Poss(\action{\nop}[i],s) \equiv \True$\\
$\Poss(\action{\unequip}[ee,i],s) \equiv \pred{equipd}[ee,i][s]$\\
$\Poss(\action{\store}[part,code,i],s) \equiv \pred{at}[part,i][s]$\\
%$\Poss(\action{\unequip}[e,i],s) \equiv \pred{equipd}[e,i][s]$\\
$\Poss(\action{\holdplace}[part,force,i],s) \equiv \pred{at}[part,i][s]$\\
%$\Poss(\action{\enter}[i],s) \equiv \neg\pred{in\_cell}[i][s]$\\
%$\Poss(\action{\exit}[i],s) \equiv \pred{in\_cell}[i][s]$\\
$\Poss(\action{\taskin}[part,i],s) \equiv \exists j.~ \pred{at}[part,j][s] \land i\neq j$\\
$\Poss(\action{\taskout}[part,i],s) \equiv \pred{at}[part,i][s]$
\end{subfigure}
\caption{Examples of precondition axioms for resources.}
\label{fig:resources_bats}
\end{figure*}
\begin{figure*}[h]
\small 
$\Poss(\ActTermV \cup \action{\nop}(i),s)  \equiv \Poss(\ActTermV,s)$ \\
%$\Poss(\set{\act},s) \equiv \Poss(\act,s) $ \\
$\Poss(\set{\action{\taskin}[part,i], \action{\taskout}[part,j]},s) \equiv \Poss(\action{\taskin}[part,i],s) \land \Poss(\action{\taskout}[part,j],s) \land \pred{partHandling}[j,i] $ \\
$\Poss(\set{\act_1(p1,\ldots,i),\act_2(p2,\ldots,j)},s) \equiv \Poss(\act_1(p1,\ldots,i),s) \land \Poss(\act_2(p2,\ldots,j),s) \land \pred{p1=p2} \land \pred{coopMatrix}[i,j]  \cdots $ 

\medskip
%$\pred{at}[p,i][\do(\ActTerm,s)] \equiv \action{\cellloads}[p,i]\in \ActTerm \lor \action{\taskin}[p,i]\in \ActTerm \lor (\pred{at}[p,i][s] \land \action{\taskout}[p,i]\not\in \ActTerm \land \neg \exists \xx.~\action{\position}[p,\xx]\in\ActTerm \land \cdots)$\\
%$\pred{material}[part,mtrl][\do(\ActTerm,s)]\equiv \action{\cellloads}[part,weight,mtrl,x,y,z,i]\in \ActTerm \lor \pred{material}[part,mtrl][s]$\\ 
%$\pred{equipd}[e,i][\do(\ActTerm,s)] \equiv \action{\equip}[e,i]\in \ActTerm \lor (\pred{equipd}[e,i][s] \land \action{\unequip}[e,i]\not\in \ActTerm)$\\
%$\pred{drilled}[hole(x,y,z),i][\do(\ActTerm,s)] \equiv \action{\machinedrill}[part,b,d,s,f,x,y,z,i]\in \ActTerm \land \exists j.~ \action{\operatemachine}[i,j] \cdots \lor{}$
%
$\pred{at}[p,i][\do(\ActTermV,s)] \equiv \action{\cellloads}[p,i]\in \ActTermV \lor \action{\taskin}[p,i]\in \ActTermV \lor \cdots$\\
$\pred{material}[part,mtrl][\do(\ActTermV,s)]\equiv \action{\cellloads}[part,weight,mtrl,x,y,z,i]\in \ActTermV \lor \cdots$\\ 
$\pred{equipd}[e,i][\do(\ActTermV,s)] \equiv  \action{\equip}[e,i]\in \ActTermV \lor \cdots $\\
$\pred{drilled}[hole(part,x,y,z),i][\do(\ActTermV,s)] \equiv \action{\robotdrill}[part,\cdots,x,y,z,i]\in \ActTermV \lor  \action{\machinedrill}[part, \cdots, x,y,z,i]\in \ActTermV \lor \cdots$

\caption{Above: example precondition axioms for theory $\BATS$. Below: examples of successor-state axioms, where fluents are affected only by compound actions corresponding to meaningful combinations of operations, i.e., those that are allowed by $\Poss$ in $\BAT[S]$. For brevity, only cases in which fluents are affected are shown. }
\label{fig:sys_bat}
\end{figure*}

\subsection{Target Program}

The product recipe specifying the possible way(s) in which a product can be manufactured is a \ConGolog program $\progTinit$ which we call the \emph{target program}. $\progTinit$ has its own BAT $\BATT$, which is distinct from $\BATS$ (for any system $S$). 
In the manufacturing as a service model, product recipes are \emph{resource independent} \cite{ANSI/ISA:10a}, i.e., specified using action terms $\A_T$ and fluents $\Fluents[T]$ understood throughout the manufacturing cloud rather than in the theory of any particular facility. 

\begin{example}
An example target program specified using the resource independent BAT $\BATT$ is shown in Figure~\ref{fig:target_prog}. Two parts denoted by $\const{b}$ and $\const{f}$ are loaded into the cell, then glue is applied to $\const{b}$ and it is placed on $\const{f}$, resulting in a composite part denoted by $\const{fb}$. The loading of $\const{b}$ and the drilling of $\const{f}$ can occur in any order, but glue must be applied to $\const{f}$ before $\const{b}$ is placed. If the resource used for drilling is not high-precision, a reaming operation is performed. Finally a rivet is applied and $\const{fb}$ is stored. 
\end{example}

\begin{figure}[t!]
$
 \begin{array}{l}
\action{\loads}[\const{f},\const{steel},\const{18},\const{810},\const{756},\const{345}] ~;\\
\action{\loads}[\const{b},\const{steel},\const{2},\const{312},\const{23},\const{20}] \conc \action{\drill}[\const{f},\const{.3},\const{200},\const{1},\const{123},\const{89},\const{21}] ~;\\
\action{\applyglue}[\const{b},\const{str\_adh}]  ~;~ \action{\place}[\const{b},\const{f},\const{fb},\const{7},\const{201},\const{140}] ~;\\
\textbf{if}~ \neg\textit{precision}(\pred{hole}[\const{f},\const{123},\const{89},\const{21}],\const{high}) ~\textbf{do}\\
\quad \action{\reaming}[\const{fb},\const{.3},\const{123},\const{89},\const{21}]\\
\action{\rivet}[\const{fb},\const{123},\const{89},\const{21}] ~;~ \action{\store}[\const{fb},\const{ok}]
\end{array}
$
    \caption{Example of target program $\progTinit$.}
    \label{fig:target_prog}
\end{figure}

To establish the \emph{manufacturability} of a product by a given system, we must establish \emph{mappings} between the resource-independent BAT $\BATT$ and the BAT $\BATS$ of the system. 
In practice, these mappings are computed for each manufacturing system, automatically or by hand, at the moment of joining the manufacturing cloud \cite{Felli//:18a}. 
Inspired by \cite{BanihashemiGL17}: %, we assume that:
\begin{itemize}
\item each $\Act(\xx)$ in $\progTinit$ is mapped to a (possibly complex) program $\prog_{\Act}(\xx)$ in $\BATS$, e.g., passing of parts through the part-handling system, equipping effectors etc.; 
%Since these can be seen as a \emph{templates} with formal arguments, we denote them by $\Act$;
\item \emph{some} fluents $\fluent\in \Fluents[T]$ correspond to formulas over the fluents in $\Fluents[S]$, i.e., to establish the value of $\fluent$ one needs to \emph{observe} the situation of the underlying theory $\BATS$ of $\progSinit$. Hence we say that these fluents model ``observations", and use of a special unary predicate $\Obs$ to distinguish them. 
\end{itemize}
%%bsl: avoid "we assume ... we assume"
%As a result, we assume mapping rules denoted as: 
%
This gives a set of mapping rules of the form:
\[
\Act(\xx) \leftrightarrow \prog_{\Act}(\xx) \qquad \fluent(\xx) \leftrightarrow \varphi_{\fluent}(\xx)
\]

%%bsl: I was wrong, it's better to put it here; also seems better to make it an example

%%bsl: space for only one example?
%As an example, the mapping for the $\loads$ action in Figure~\ref{fig:target_prog} might be:
%\[
% \begin{array}{l}
% \set{\action{\loads}[part,\cdots]} \leftrightarrow (~\A_1 \tconc \cdots \tconc \A_n~)^* ; \\
%\quad \textbf{if}~ \neg \pred{material}[part,\const{ceramic}]  ~\textbf{then}\\ 
%\quad \quad  \action{\cellloads}[part,\cdots,\const{1}] \mid \action{\cellloads}[part,\cdots,\const{5}]\\ 
%\quad \textbf{else} ~\action{\cellloads}[part,\cdots,\const{5}]\\
%\end{array}
%\]
%%
%which specifies that only the operator (resource $\Res{5}$) is allowed to bring ceramic parts into the cell, otherwise also the robot $\const{1}$ may be used. Crucially, note that a series of preliminary actions are required for these loading actions to be executable (e.g. equipping the right end effectors, clearing the working space of the robot, leaving and entering the cell, etc.) but these additional actions are not explicitly listed: this is one of the objectives of the composition. Each $\A_i$ stands for $\choice{\xx}~ \act_{i,1}(\xx) \mid \cdots \mid \act_{i,q_i}(\xx)$, namely each one of the $n$ resources can perform any of the $q_i$ actions in its BAT. 

For example, a rule that maps the resource-independent $\drill$ action in Figure~\ref{fig:target_prog} to a program specifying the possible ways in which a drilling operation can be executed in a specific system might be:

\smallskip
\noindent
$
 \begin{array}{l}
\set{\action{\drill}[part,dmtr,speed,x,y,z]} \leftrightarrow (~\A_1 \tconc \cdots \tconc \A_n~)^* ~;\\
\qquad \textbf{if}~ \pred{size}[part,\const{large}]   ~\textbf{then}~  \varphi_{\action{d}}^1 \mid \varphi_{\action{d}}^2 ~\textbf{else}~  \varphi_{\action{d}}^2 \qquad \text{with:}
\end{array}
$

\smallskip
\noindent
\begin{tabular}{@{}r@{}c@{}l@{}}
$\varphi_{\action{d}}^1$ & $=$ & $\pi~ \pred{feed},\pred{i},\pred{j},\pred{k}. (\action{\pressureh}[p,i] \tconc$\\ 
& & $\action{\holdplace}[part,\const{2k},j]$ $\tconc$\\
& & $\action{\robotdrill}[part,bit,dmtr,speed,feed,x,y,z,k])$\\
$\varphi_{\action{d}}^2$ & $=$ & $\pi~ \pred{feed},\pred{i},\pred{j}.(\action{\operatemachine}[i,j] \tconc$\\
& & $\action{\machinedrill}[p,bit,diam,speed,x,y,z,j])$
\end{tabular}

%%bsl: commented out for space
%Similarly, we can easily write a mapping rule for the $\pred{precision}$ predicate in $\BATT$ into a situation-suppressed formula on $\BATS$, which states that the precision of a specific hole in workpiece $\pred{p}$ has value $\const{high}$ iff the the drill bit used was not consumed and the resource guarantees such precision.  The dots replace the hole position. 
%
%\medskip
%\noindent
%$\pred{precision}[\pred{hole}[p,\cdots],\const{high}] \leftrightarrow \exists i.~ \pred{drilled}[\pred{hole}[p,\cdots],i]$
%
%$\;\; \land~ \pred{bit\_condition}[i,\const{ok}] \land \pred{drill\_precision}[i,\const{high}]$

\smallskip
Crucially, a number of preliminary actions are required for these loading actions to be executable (e.g. equipping the right end effectors, clearing the working space, etc.) but these are not explicitly listed, as it is one of the objectives of the composition. Each $\A_i$ stands for $\choice{\xx}~ \act_{i,1}(\xx) \mid \cdots \mid \act_{i,q_i}(\xx)$: each resource can perform any of their actions. 
We can write similar mapping rules for fluents, e.g., specifying how the precision of a drilled holes is observed. %$\pred{p}$ is $\const{high}$ iff the the drill bit used is not consumed and the resource guarantees such precision. The dots replace the hole position: $\pred{precision}[\pred{hole}[p,\cdots],\const{high}] \leftrightarrow \exists i.~ \pred{drilled}[\pred{hole}[p,\cdots],i]$. 

In modelling a manufacturing domain, it is often natural to consider that the target and available programs are bounded \cite{DeGiacomoLPS16}. In practice, this means that the information of interest \emph{in each moment}, corresponding to the parts that are being manufactured, the possible operations executed, their possible parameters and the data produced, are not arbitrarily large but are bounded by a known bound. E.g., resources have bounded capacity, a product recipe consists of finitely many parts and requires finitely many operations. This assumption will be used to give a decidable technique to synthesize controllers.

\section{Orchestration via Simulation}

To define the conditions under which a target program $\progTinit$ can be realised by executing the available program $\progSinit$, we relate their execution. Extending the definition in \cite{SardinaG09} to our setting, we define the notion of \emph{simulation} between programs: 

\medskip
\noindent
$(\progT,\sitT)\preceq(\progS,\sitS) \supset$

$\Final(\progT,\sitT) \supset \Final(\progS,\sitS) ~\land$

$\bigwedge_{\fluent \in \Fluents[T]} \forall\xx. \fluent(\xx,\sitT) \equiv \varphi_{\fluent}(\xx,\sitS) ~\land$

$\forall\progT',\Act,\xx. ~\TransObs(\progT,\sitT,\sitS,\progT',\do (\Act(\xx),\sitT)) \supset$

$\quad\quad \exists \progS',\sitS'.  ~\Trans^*(\progS,\sitS,\progS',\sitS') ~\land\Do(\prog_{\Act}(\xx),\sitS,\sitS') ~\land$

$\quad\quad (\progT',\sitT')\preceq(\progS',\sitS')$

\medskip
\noindent
where $\TransObs(\progT,\sitT,\sitS,\progT',\sitT')$ iff $\Trans(\progT,\sitT,\progT',\sitT')$ and $\Final$ is substituted with $\FinalObs$, defined as:

\medskip
\noindent
\begin{tabular}{@{}l @{ }c@{ } l}
$\FinalObs(\progT,\sitT,\sitS)$ & $\equiv$ & $\Final(\progT,\sitS)$ if $\progT = \phi?$ with $Obs(\phi)$ \\
& & $\Final(\phi?,\sitT)$ otherwise.\\
\end{tabular}

\medskip
\noindent
Intuitively, we use the situation for theory $\BATS$ for testing the situation suppressed fluents in the theory $\BATT$ which correspond to observations of the underlying situations $\sitS$. 
Moreover, $\Do(\prog,\sit,\sit') := \exists \prog.~ \Trans^*(\prog,\sit,\prog',\sit') \land \Final(\prog',\sit')$ is used to establish that there exists a complete execution of the program $\prog$ from $\sitS$ to $\sitS'$.%~\cite{BanihashemiGL17}\todo{is this the right cit? Do is older}. 

The relation above specifies the following property: for every possible step from situation $\sitS$, in which the target program evolves from $\progT$ to $\progT'$ by executing $\Act(\xx)$, there exists an execution of the concurrent program in situation $\sitS$, from $\progS$ to $\progS'$ (and through a complete execution $\prog_{\Act}(\xx)$), for which the same property holds. Also, whenever the target program can terminate, also the available program can. Through $\FinalObs$, we allow the target program to assess the values of fluents on the situation $\sitS$: the process recipes can specify conditions to be checked by observing the system. 
E.g., the value of $\pred{precision}$ in the program in Figure~\ref{fig:target_prog} must be observed after the drilling operation. 

Moreover, note that we could not simply replace each action $\Act(\xx)$ in the target program $\progTinit$ by its corresponding program $\prog_{\Act}(\xx)$ and then apply known approaches such as that of \cite{SardinaG09}: to satisfy the simulation requirement it is enough to find at least one way in which $\prog_{\Act}$ can be executed so that the simulation property is maintained, whereas a syntactical substitution we would require that all such evolutions must be possible in the system. 

Essentially, the simulation captures the fact that $\progTinit$ can implement the execution of $\progTinit$, subject to the mapping rules. %and the control flow of the target program. The next definition formalizes this intuition.

\begin{definition}[Realizability]
The target program $\progTinit$  is \emph{realizable} by the available program 
$\progSinit$ if $(\progTinit,S^0_T)\preceq(\progSinit,S^0_S)$.
\end{definition}
When $\progTinit$ is realized by $\progSinit$ then, at every step, given a possible ground action $\Act(\xx)$ selected by $\progTinit$, one can select the corresponding program $\prog_{\Act}(\xx)$, 
execute it, and then return the control to the target program for the next action selection. Notice, however, that 
the execution of $\prog_{\Act}(\xx)$ is not deterministic, as \ConGolog programs include in general choices of 
arguments and nondeterministic branching. Nonetheless, the existence of the simulation guarantees that this is possible, but it does not detail how. Similarly to \cite{SardinaG09}, we assume to have total control on the interpreter executing the available concurrent program $\progSinit$, whose nondeterminism is `angelic', and  define here the notion of \emph{controller}: the unit responsible for orchestrating the system, hence the available resources, at each step. Intuitively, this requires to consider any possible execution of $\progTinit$ and $\progSinit$, as commented above, which are infinite. 

First, in order to \emph{isolate} the source of such infiniteness into the program \emph{data} only, for each program (target and available) we separate the assignments of pick variables to objects in the domain from the control flow of the programs, namely their \emph{program counter}. This is the approach of \cite{DeGiacomoLPS16}, which we adapt here to our framework. Hence, we equivalently represent a program $\prog^0$ as the couple $\tup{\prog,\xx}$, where $\prog$ merely denotes its current program counter,  and $\xx=\tup{x_1,\ldots,x_k}$ is a tuple of object terms so that each $x_i\in\Delta$ is the current value of $i$-th pick variable of $\prog^0$. We call $\xx$ the (current) \emph{environment}. 
Importantly, this is merely a syntactic manipulation: as showed in \cite{DeGiacomoLPS16}, we can reconstruct the original program $\prog^0$ by replacing the free pick variables of $\prog$ by those object terms to which variables $\xx$ are assigned. This is denoted by writing $\prog[\xx]$.  
Nonetheless, assuming programs without recursion, this simple technique allows one to obtain a \emph{finite} set of possible program counters for a given program, which we define next (the possible environments remain infinite). 

\begin{definition}[Syntactic closure of a program]
Given a program $\prog^0$, it is the set $\Gamma_{\prog^0}$ inductively defined as follows: $(1)$ $\prog^0 \in \Gamma_{\prog^0}$; $(2)$ if $\prog_1;\prog_2\in\Gamma_{\prog^0}$ and $\prog'_1\in \Gamma_{\prog_1}$ then $\prog'_1 ; \prog_2\in \Gamma_{\prog^0}$ and $\Gamma_{\prog_2}\subseteq \Gamma_{\prog^0}$; $(3)$ if $\prog_1\mid\prog_2\in \Gamma_{\prog^0}$ then $\Gamma_{\prog_1},\Gamma_{\prog_2}\subseteq\Gamma_{\prog^0}$; $(4)$ if $\choice{z}\prog\in\Gamma_{\prog^0}$ then $\Gamma_{\prog}\subseteq\Gamma_{\prog^0}$; $(5)$ if $\prog^*\in \Gamma_{\prog^0}$ then $\prog;\prog^*\in \Gamma_{\prog^0}$; $(6)$ if $\prog_1\conc\prog_2\in \Gamma_{\prog^0}$ and $\prog'_1\in \Gamma_{\prog_1}$ and $\prog'_2\in \Gamma_{\prog_2}$ then $\prog'_1\conc\prog'_2\in \Gamma_{\prog^0}$; $(7)$ if $\prog_1\tconc\prog_2\in \Gamma_{\prog^0}$ and $\prog'_1\in \Gamma_{\prog_1}$ and $\prog'_2\in \Gamma_{\prog_2}$ then $\prog'_1\tconc\prog'_2\in \Gamma_{\prog^0}$.
\end{definition}

Denoting the finite set of all possible environments of a program $\prog^0$ as $\Delta^k$, so that $k$ is the number of its pick variables, we call a triple $\tup{\prog,\xx,\sit}\in \Gamma_{\prog^0} \times \Delta^k \times S$ a (complete) \emph{configuration} of $\prog^0$.
Denoting the set of possible configurations as $\Confs{\delta^0}$, we can finally define our notion of controller which, intuitively, given the current  configurations $\confT$ and $\confS$ for the target and system programs, and a new configuration for the target, selects a sequence of configurations for the system so that the simulation relation is recovered.   

%\begin{definition}[Controller]
%Given a target program $\progTinit$  realizable by an available concurrent program $\progSinit$,
%a \emph{controller} for $\progSinit$ that realizes $\progTinit$ is a function 
%$\contr:\Gamma_{\progTinit}\times\Delta^{k_\pedixT} \times \SitT\times \Gamma_{\progSinit}\times\Delta^{k_\pedixS} \times \SitS\times Act_T\rightarrow (\Gamma_{\progSinit}\times\Delta^{k_\pedixS} \times \SitS)^*$
%such that:
%\begin{compactitem}
%	\item $\contr(\progT,\sitT,\progS,\sitS,\ActTerm)$ is defined whenever $(\progT,\sitT)\preceq(\progS,\sitS)$
%		and there exists $\progT'$ s.t.~$\TransObs(\progT,\sitT,\sitS,\progT',\do(\ActTerm,\sitT))$;
%	\item if $\contr(\progT,\sitT,\progS,\sitS,\ActTerm)=\tup{\prog^0,\sit^0}\ldots\tup{\prog^m,\sit^m}$ then:
%		\begin{itemize}
%			\item for $i\in[0,\ldots,m-1]$, $\Trans(\prog^i,\sit^i,\prog^{i+1},\sit^{i+1})$,   
%			with $\prog^0=\progS$ and $\sit^0=\sitS$;
%			\item $\tup{\progT',\do(\ActTerm,\sitT)}\preceq\tup{\prog^m,\sit^m}$, for $\progT'$ as above.
%		\end{itemize}
%\end{compactitem}
%\end{definition}
%% 
\begin{definition}[Controller]
Given a target program $\progTinit$  realizable by an available program $\progSinit$,
a \emph{controller} for $\progSinit$ that realizes $\progTinit$ is a function 
$\contr:\Confs{\progTinit}\times\Confs{\progSinit} \times \Confs{\progTinit} \rightarrow \Confs{\progSinit}^*$
s.t.:
\begin{itemize}
	\item $\contr(\tup{\progT,\xxT,\sitT},\tup{\progS,\xxS,\sitS},\tup{\progT',\xxT',\sitT'})$ is defined whenever $(\progT[\xxT],\sitT)\preceq(\progS[\xxS],\sitS)$
		and there exist $\Act,\xx$ s.t.~$\TransObs(\progT[\xxT],\sitT,\sitS,\progT'[\xxT'],\do(\Act(\xx),\sitT))$;
	\item if $\contr(\tup{\progT,\xxT,\sitT},\tup{\progS,\xxS,\sitS},\tup{\progT',\xxT',\sitT'})$ returns the sequence $\tup{\prog^0,\xx^0,\sit^0}\ldots\tup{\prog^m,\xx^m,\sit^m}$ then:
		\begin{itemize}
			\item for $i\in[0,m-1]$, $\Trans(\prog^i[\xx^i],\sit^i,\prog^{i+1}[\xx^{i+1}],\sit^{i+1})$,   
			with $\prog^0=\progS$, $\xxS^0=\xxS$ and $\sit^0=\sitS$, namely the sequence is executable in the available system; and
			\item $\tup{\progT'[\xxT'],\sitT'}\preceq\tup{\prog^m[\xx^m],\sit^m}$, i.e., the simulation between the resulting programs is preserved. Note that, by the definition of simulation, this implies that the sequence of system configurations returned by the controller correspond to a complete execution of $\prog_{\Act}(\xx)$.
		\end{itemize}
\end{itemize}
\label{def:controller}
\end{definition}

\section{Controller Synthesis}

%\todo{PP: orchestration is never used as a noun... FP: Check the use of GA and TS}
To check whether a simulation exists and, if so, build a controller, we resort to model checking
for a variant of the (modal) $\mu$-calculus  in~\cite{CalvaneseDMP18},
interpreted over \emph{game arenas} (GA), i.e., special  
(labelled) transitions systems (TS) capturing turn-based game rules. 
We show that when such systems are state-bounded, computing winning strategies becomes decidable. 

\subsection{Model checking over game arenas}
For a set of fluents $\Fluents$ and an object domain $\Delta$, 
we denote by $\Int^{\Fluents}_\Delta$ the set of all (standard) FO 
interpretations $\tup{\Delta,\cdot^{\I(q)}}$ over $\Delta$ of the fluents 
in $\Fluents$.
 
\begin{definition}[Game arena]
Let $\Fluents$ be a set of fluents %already used.... removed: predicates (also called \emph{fluents}) 
including the special $0$-ary fluents (i.e., propositions) $\pred{turnS}$ and $\pred{turnT}$,
and $\Delta$ an object domain.
A \emph{game arena} over $\Fluents$ and $\Delta$ is a tuple
$\TS = \tup{\Delta,Q,q_0,\rightarrow,\I}$, where: 
\begin{compactitem}
	\item $\Delta$ is the object domain;
	\item $Q$ is the set of GA states;
	\item $q_0\in Q$ is the GA initial state;
	\item $\rightarrow\subseteq Q\times Q$ is the GA transition relation;
	\item $\I:Q\rightarrow \Int^{\Fluents}_\Delta$ is a labeling function, associating to each state $q\in Q$ an interpretation $\I(q)=\tup{\Delta,\cdot^{\I(q)}}\in \Int^{\Fluents}_\Delta$,
		s.t.~exactly one among $\pred{turnS}$ and $\pred{turnT}$ is true.
\end{compactitem}
\label{def:TS}
\end{definition}
$\TS$ represents the moves available to two players, 
Target and System,  in a game, but not the game's goal. 
The arena is turn-based: Target and System can move in states where, 
respectively, $\pred{turnT}$ and $\pred{turnS}$ hold. 
Turns are not strictly alternating. 
Wlog, we assume that in $q_0$ it is Target's  turn.

Goals are expressed through $\mu$-calculus formulas. 
The language we use, called $\mu\L_{c}$ ($c$ indicates 
that we use only closed FO formulas), is:
$$\Phi := \phi \mid \neg \Phi \mid \Phi_1 \land \Phi_2 \mid \langle-\rangle \Phi \mid Z \mid \mu Z.\Phi \mid \nu Z.\Phi$$
\noindent
were $\phi$ is a FO sentence with predicates from $\Fluents$ and constants from $\Delta$
(under unique-name assumption, we can safely use objects from $\Delta$ as constants); 
the modal operator $\langle-\rangle \Phi$ denotes the existence of a transition from the 
current state to a state where $\Phi$ holds; we use the abbreviation $[-]\Phi$ for 
$\neg \langle-\rangle \neg \Phi$; 
$Z$ is a second-order (SO) predicate variable over sets of states, 
and $\mu Z.\Phi$ and $\nu Z.\Phi$ denote the least and greatest 
fixpoints, respectively, with $\Phi$ seen as a predicate transformer with respect to $Z$. 
By the language semantics below, one can see that the only interesting 
formulas are those that are closed wrt to SO (in addition to FO)  
variables. In fact, SO variables are needed only for technical reasons, 
 to make the fixpoint constructs available.

Given a GA $\TS = \tup{\Delta,Q,q_0,\rightarrow,\I}$,
the semantics of a $\mu\L_c$ formula $\Phi$ over $\TS$ is 
inductively defined as follows, where $v$ is an assignment 
to SO variables: 
\[
 \begin{array}{l}
\ext{\phi}[\TS][] = \set{ q \mid q\in Q \text{ and } \I(q)\models \psi } \\
\ext{\neg \Phi} = Q \setminus  \ext{\Phi} \\
\ext{\Phi_1 \land \Phi_2} = \ext{\Phi_1} \cap \ext{\Phi_2} \\	
\ext{\langle-\rangle \Phi} = \set{ q \mid \exists q', q\rightarrow q', q'\in \ext{\Phi}  } \\
\ext{Z} = v(Z) \\[0.5ex]	
\ext{\mu Z.\Phi} = \bigcap \set{\E \subseteq Q \mid \ext{\Phi}[\TS][v[Z/\E]] \subseteq \E} \\	
\ext{\nu Z.\Phi} = \bigcup \set{\E \subseteq Q \mid \E \subseteq \ext{\Phi}[\TS][v[Z/\E]] } \\	
\end{array}
\]

\noindent
A state $q\in Q$ is said to \emph{satisfy} 
a $\mu\L_c$ formula $\Phi$ 
(under a SO assignment $v$), if $q\in\ext{\Phi}$. 
We say that $\TS$ satisfies $\Phi$ if $q_0\in\ext{\Phi}$.
Observe that when $\Phi$ is closed wrt SO variables, as are 
formulas of practical interest, $v$ becomes irrelevant.
When not needed, we omit $v$ from $\ext{\cdot}$, thus using 
$\ext{\cdot}[\TS][]$.

\emph{$\mu\L_c$ model checking} is the problem of checking whether 
a GA $\TS$ satisfies a $\mu\L_c$ formula $\Phi$.
When the GA is finite, this can be solved by direct application of the 
semantics. 
Thus one can compute the set $\Win$, called the \emph{winning set}, 
of states of $\TS$ that satisfy the formula $\Phi$.
On the other hand, the problem can be shown to be 
undecidable in the general case (by reduction from the halting problem). 
In \cite{CalvaneseDMP18}, decidability is proven under sufficient conditions,
including \emph{genericity} and \emph{state-boundedness}.
We recall these notions and relevant results.

Given two FO interpretations $\I$ and $\I'$ over a set of fluents $\Fluents$ and an object 
domain $\Delta$, write $\I\sim_h\I'$ if $h:\Delta\rightarrow\Delta$ is an isormophism 
between $\I$ and $\I'$, in which case $\I$ and $\I'$ are said to be \emph{isomorphic (under $h$)}.
Intuitively, isomorphic states can be obtained by one another via object renaming.
\begin{definition}[Genericity]
	A GA $\TS = \tup{\Delta,Q,q_0,\rightarrow,\I}$ is said to be \emph{generic} if:
	for every $q_1,q_1',q_2\in Q$ and every bijection
	$h:\Delta\rightarrow\Delta$, if $\I(q_1)\sim_h\I(q_2)$,
	then there exists $q'_2\in Q$ s.t.~$q_2\rightarrow q_2'$ and $\I(q_1')\sim_h \I(q'_2)$.
	\label{def:generic}
\end{definition}
In words, a GA is generic if states identical
modulo object renaming show same behaviors, in particular, 
the have same transitions (modulo renaming).

For an interpretation $\I$, denote by $\adom(\I)$ the \emph{active domain} of 
$\I$, i.e., the set of all objects that 
occur in the interpretation of some fluent in $\I$.
\begin{definition}[State-boundedness]
	A GA $\TS = \tup{\Delta,Q,q_0,\rightarrow,\I}$ is said to be \emph{state-bounded}
	by $b\in\mathbb{N}$ if $\card{\adom(\I(q))}\leq b$, for every $q\in Q$.
	$\TS$ is said to be \emph{state-bounded} if it is state bounded by $b$, for some $b$.
\end{definition}
Intuitively, a GA is state-bounded if, in every state, 
the number of objects occurring in the interpretation of some fluent
is bounded by a given $b$.
Because $\mu\L_c$ is a strict sublanguage of the general FO variant $\mu\L$ defined in \cite{CalvaneseDMP18},
by Theorem 17 therein, we have the following result.
\begin{theorem}
	Given a generic and state-bounded GA $\TS$ and a $\mu\L_c$ formula $\Phi$, there exists 
	a finite-state GA $\finite{\TS}$ such that $\TS\models\Phi$ iff $\finite{\TS}\models\Phi$.
	\label{th:abstraction} 
\end{theorem}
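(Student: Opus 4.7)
The plan is to obtain Theorem~1 by reducing it to Theorem~17 of \cite{CalvaneseDMP18} and exploiting the fact that $\mu\L_c$ is, by design, a syntactic fragment of the $\mu\L$ language considered there. First I would make this sublanguage claim precise: every $\mu\L_c$ formula is obtained from $\mu\L$ by restricting the FO atoms to \emph{closed} sentences over $\Fluents$ and $\Delta$, so the semantic clauses for $\langle-\rangle$, Boolean connectives and the two fixpoints coincide with those of $\mu\L$, and hence any bisimulation-like preservation result valid for $\mu\L$ applies \emph{a fortiori} to $\mu\L_c$.

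Next I would recall the abstraction machinery from \cite{CalvaneseDMP18}. Given a generic, state-bounded GA $\TS=\tup{\Delta,Q,q_0,\rightarrow,\I}$, one fixes a finite ``recycling'' set $\Delta_b\subseteq\Delta$ of cardinality depending only on the bound $b$ and the size of $\Phi$, and builds $\finite{\TS}$ by quotienting the states of $\TS$ under the isomorphism equivalence $\sim$: two states $q_1,q_2$ are identified whenever $\I(q_1)\sim_h\I(q_2)$ for some bijection $h:\Delta\to\Delta$. Genericity guarantees that the induced transition relation is well-defined (any transition out of $q_1$ is matched, modulo $h$, by a transition out of $q_2$), and state-boundedness by $b$ guarantees that, up to isomorphism, the active domain of every reachable interpretation can be represented using only objects from $\Delta_b$. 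Hence the quotient $\finite{\TS}$ is finite-state.

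Then I would argue that the quotient map is a \emph{persistent bisimulation} in the sense of \cite{CalvaneseDMP18}, which is exactly the notion shown there to preserve $\mu\L$. Because $\mu\L_c$ evaluates only closed FO sentences, isomorphic states satisfy the same atomic formulas, so the atomic clause of the bisimulation is automatic; the ``zig-zag'' clauses come from genericity. Invariance under persistent bisimulation then yields $q\in\ext{\Phi}[\TS][]$ iff $[q]_\sim\in\ext{\Phi}[\finite{\TS}][]$ for every $q\in Q$, and in particular for $q_0$, which gives the desired $\TS\models\Phi \iff \finite{\TS}\models\Phi$.

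The main obstacle, and the reason the result is nontrivial rather than immediate, is verifying that the hypotheses of the abstraction theorem in \cite{CalvaneseDMP18} are genuinely met by our GAs: we must check that the notion of genericity in Definition~\ref{def:generic} coincides with (or strengthens) theirs, that state-boundedness matches their run-boundedness condition at the level needed for the construction of $\finite{\TS}$, and that no additional features of $\mu\L$ they rely on (e.g., uniform atomic formulas with free object variables) are lost when restricting to $\mu\L_c$. Once these alignments are spelled out, the statement follows by direct appeal to \cite[Theorem~17]{CalvaneseDMP18}.
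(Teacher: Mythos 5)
Your proposal takes essentially the same route as the paper: the paper justifies this theorem in one line by observing that $\mu\L_c$ is a strict sublanguage of the FO $\mu$-calculus $\mu\L$ of \cite{CalvaneseDMP18} and then directly invoking Theorem~17 therein, which is exactly your reduction. The additional detail you supply about the quotient construction, genericity giving the zig-zag conditions, and preservation via persistence-preserving bisimulation matches the machinery the paper recalls informally right after the theorem statement, so your write-up is a correct (and somewhat more explicit) version of the paper's argument.
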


\noindent
Thus, we can sidestep $\TS$ infiniteness
by checking whether $\finite{\TS}\models\Phi$,
instead of $\TS\models\Phi$.
We do not describe how to obtain $\finite{\TS}$, 
referring the reader to~\cite{CalvaneseDMP18}, 
where a procedure is provided, which requires that, in $\TS$,
\myi $\rightarrow$ is computable, and 
\myii the existence of an isomorphism between states is decidable.
The returned $\finite{\TS}$ is s.t.~$\finite{\TS} = \tup{\finite{\Delta},\finite{Q},q_0,\finite{\rightarrow},\finite{\I}}$,
with:
$\finite{\Delta}$, $\finite{Q}$, and $\finite{\rightarrow}$ suitable finite subsets of their 
$\TS$ counterparts, and $\finite{\I}$ the projection of $\I$ over $\finite{Q}$,
with $\Delta$ replaced by $\finite{\Delta}$. Notice that the  
$\TS$ and $\finite{\TS}$ share the same $\Fluents$.

$\finite{\TS}$ and $\TS$  are related by the notion of 
\emph{persistence-preserving bisimulation},
\emph{$p$-bisimulation} for short~\cite{CalvaneseDMP18}, 
i.e., a 
lifting of standard bisimulation to the case where 
states are labelled by FO (instead of propositional) interpretations.
$p$-bisimulation is defined co-inductively over triples 
$\tup{q_1,h,q_2}$, where $q_1$ and $q_2$ are states of two GAs
and $h$ is an isomorphism between their interpretations, restricted to 
the active domains. 
In details, if $\tup{q_1, h, q_2}$ is in a $p$-bisimulation $R$, then:
\begin{inparaenum}[(i)]
	\item\label{bisim:req1} $q_1$ and $q_2$ 
		have isomorphic fluent extensions, according to 
		$h:\adom(\I_1(q_1))\mapsto\adom(\I_2(q_2))$
		(objects not occurring in fluent extensions are neglected)
		--we denote this by writing $\tilde{\I_1}(q_1)\sim_h \tilde{\I_2}(q_2)$;
	\item\label{bisim:req2} for every successor $q_1'$ of $q_1$ there exists a successor 
		$q_2'$ of $q_2$ and 
		a bijection $b:\adom(\I_1(q_1))\cup\adom(\I_1(q_1'))\mapsto\adom(\I_2(q_2))\cup\adom(\I_2(q_2'))$ 
		that extends $h$ to $\adom(\I_1(q'_1))$
		s.t.~for its restriction $h'$ to $\adom(\I_2(q_2'))$, $\tup{q_1',h',q_2'}$ holds;
	\item\label{bisim:req3} the analogous of~(\ref{bisim:req2}) 
			holds for every successor $q_2'$ of $q_2$.
\end{inparaenum}

$p$-bisimilarity intuitively means that 
the identity of objects is preserved as long as they persist in the active domain
or if they have just disappeared from it. 
Two GAs are $p$-bisimilar if their respective initial states are in some 
$p$-bisimulation.  $\TS$ and $\finite{\TS}$ are $p$-bisimilar.

\subsection{Strategies}
In this paper, we consider only formulas of the form:
$$\PhiGA=\nu X.\mu Y.(\pred{turnT} \land \phi \land [-] X ) \lor{} (\pred{turnS} \land \langle-\rangle Y)$$
where $\phi$ is a FO formula over $\Fluents\setminus\set{\pred{turnS},\pred{turnT}}$.
Intuitively, $\PhiGA$ holds in all those states where: 
either \myi it is Target's turn, \myii $\phi$ holds, 
and  \myiii no matter how Target moves, 
System can reply with a sequence of moves
--which, by a slight abuse of notation, we call \emph{plan},
that takes the GA to a new state where $\PhiGA$ holds,
or \myiv it is System's turn and \myv System has a plan 
to reach a state where $\PhiGA$ holds. Notice that if the 
initial state of the GA satisfies $\Phi$, then no matter how 
Target moves (now or in its future turns), System will always 
have a plan to enforce $\PhiGA$.

Through model checking, we can obtain the winning set of $\PhiGA$. 
To this end, the following operators are needed:
 
\smallskip
\begin{compactitem}
	\item $PreE(Z) = \set{ q\in Q ~|~ \exists q\rightarrow q' \text{ s.t. } q'\in Z}$;
	\item $PreA(Z) = \set{ q\in Q ~|~ \forall q\rightarrow q' \text{ then } q'\in Z}$. 
\end{compactitem}

\smallskip
With these, we compute the approximants for the SO variable $X$, ending up with a 
greatest fixpoint. 
The initial approximant of $X$ is $X_0=Q$, and 
the next one is computed as $X_{i+1}=Y_{i}\cap X_i$, 
where $Y_i=\ext{\mu Y.(\pred{turnT}\land\phi\land [-]X)\lor (\pred{turnS} \land \langle-\rangle Y)}[\TS][v[X/X_i]]$.
$Y_i$ being a (least) fixpoint, it can be computed, as standard, 
through successive approximants $Y_{i0}=\emptyset,Y_{i1},\ldots,Y_{i_{n_i}}=Y_i$, as
$Y_{i(j+1)}= 
Y_{ij} \cup 
(\ext{\pred{turnT}\land\phi}[\TS][] \cap preA(X_i)) \cup (\ext{\pred{turnS}}[\TS][] \cap preE(Y_{ij}))$.

The winning set  $\Win$ is the resulting (greatest) fixpoint, i.e., 
$\Win=\ext{\Phi_{Sim}}[\TS][]=X_k$ (for some $k$).

If a state is in 
the winning set, System has a plan to reach a state 
where $\PhiGA$ holds. However, we do not know such plan.
We are interested not only in computing the winning states where $\PhiGA$ holds  
but also in finding a ``strategy'' showing how the System
can enforce $\PhiGA$.

Let $\TS = \tup{\Delta,Q,q_0,\rightarrow,\I}$ be a GA.
A \emph{history} of $\TS$ is a sequence $\tau=q_0\cdots q_\ell\in Q^+$
s.t., for every $i\in[0,\ell-1]$, $q_i\rightarrow q_{i+1}$. 
We denote by $\H$ the set of histories of a GA.
A System (Target) \emph{strategy} is a function 
$\strat:\H\rightarrow Q$ s.t.~if $\strat(q_0\cdots q_\ell)=q$
then $q_\ell \models \pred{turnS}$ ($q_\ell \models \pred{turnT}$) and $q_\ell\rightarrow q$.
In this paper, we are interested only in System's strategies, i.e., functions that, given a history terminating in a 
state where System moves, prescribes a legal transition, the \emph{system's move}, 
to perform next. Since $q$ does not have to be s.t.~$q\models\pred{turnT}$, System can perform move sequences.

The strategies of interest are those, called \emph{winning}, 
which enforce $\PhiGA$; these are defined next.
A history $\tau=q_0\cdots q_\ell$ is said to be \emph{induced} by 
a strategy $\strat$ if, for every $i\in[0,\ell-1]$, 
whenever $q_i\models\pred{turnS}$, 
$q_{i+1}=\strat(q_0\cdots q_i)$.
\begin{definition}[Winning strategy]
A System strategy $\strat$ is said to be \emph{winning} for a 
formula of the form $\PhiGA$
%%$\nu X.\mu Y.(\pred{turnT} \land \phi \land [-] X ) \lor (\pred{turnS} \land \langle-\rangle Y)$ 
if, for every history $\tau=q_0\cdots q_\ell$ induced by $\strat$, either:
\begin{enumerate}
	\item\label{def:win-strat-req1} 
		$q_\ell\models\pred{turnS}$ and there exists a history 
		$\tau'=q_0\cdots q_\ell\cdots q_m$ induced by $\strat$ 
		s.t.~requirement~\ref{def:win-strat-req2} below 
		(with $\ell$ replaced by $m$) holds; or	
	\item\label{def:win-strat-req2} $q_\ell\models\pred{turnT}\land\phi$ and for
		all histories $\tau'=q_0\cdots q_\ell q_{\ell+1}$, 
		requirement~\ref{def:win-strat-req1} above (with $\ell$ replaced by $\ell+1$) holds.
\end{enumerate}
\end{definition}

Let $\finite{\Win}$ be the winning set of $\PhiGA$ computed on the finite-state  
GA $\finite{\TS}$ $p$-bisimilar to $\TS$. From this, we extract a 
winning strategy. Notice we cannot simply compute a strategy  
prescribing a path from a winning state to \emph{any} other winning state, as 
in the presence of loops there would be no guarantee of eventually reaching 
the goal, i.e., a state where $\pred{turnT} \land \phi$ holds.

To obtain a winning strategy, we observe that the computation of 
$\finite{\Win}$ 
amounts to computing a series of $Y$'s approximants: 
$Y_{00},\ldots,Y_{0{n_0}},\ldots,Y_{k0},\ldots,Y_{k{n_k}}$, where 
each $Y_{ij}$ corresponds to the $j$-th approximant of the set of 
states where $\pred{turnS}$ holds and from which a state in $X_{i}$ where 
$\pred{turnT}\land\phi$ 
holds can be reached through System moves only.
Thus, from $Y_{ij}$ one such state is reachable with $j-1$ System moves.
%By construction, for every $i\geq0$ and $j>0$, every state $q_{ij}\in Y_{ij}$,  is $j-1$ steps ``far apart'' from a state $q_{OK}$ where $\phi_{OK}$ holds. More precisely, there exists a sequence of transitions $q_1\rightarrow q_2\rightarrow\cdots q_{j}$ s.t.~$q_1=q_{ij}$, $q_j\models\phi_{OK}$, and $q_i\models \pred{turnS}$, for $i\in[0,j-1]$.
Since we are interested not in a generic $X$ approximant but 
in the winning set, we consider $Y_{k0},\ldots,Y_{kn_k}$ only, 
as these approximate the set of states that lead to
states of the winning set that satisfy $\pred{turnT}\land\phi$.
Thus, we can ``stratify'' $\finite{\Win}$ by annotating each of its states with the index $j$ of the first 
approximant $Y_{kj}$ it has appeared in. We denote the annotation of a state $q$ 
as $ann(q)=j$. In this way, to obtain a winning strategy for $\PhiGA$, it is enough to choose a transition that 
takes the current state to one annotated with a lower value.  
%Notice that, when choosing the  transition, one can move the state to any other approximant (as long as the index $j$ is lower). 
%%
%From $\phiWin$, we can directly extract a winning strategy wrt $\Phi_{Sim}$: %first, for each $q$ such that $q\models \pred{turnT}$ assume $ann(q)=\infty$; then, a winning strategy wrt $\Phi_{Sim}$ 
Thus, a winning strategy is any function $\strat$ s.t.~$\strat(q_0\ldots q_m) = q_{m+1}$ implies that $ann(q_m)>ann(q_{m+1})$. In fact, $\strat$ is \emph{memoryless}, in that
it does not depend on any state of the input history but the last one ($q_m$).

Next, we describe how one such strategy $\finite{\strat}$ computed on $\finite{\TS}$ 
for a goal $\PhiGA$ can be actually executed on $\TS$. 
%%We assume $\finite{\strat}$ is a strategy for the System
%%(the case for the Target is dual, although not of interest in this paper).

%%Once obtained, a strategy $\finite{\strat}$ computed on $\finite{\TS}$ 
%%for $\Phi$ can be transformed into one executable on $\TS$.
%%
\begin{definition}[$p$-bisimilar strategy transformation]\label{def:p-bisim-transform}
Consider two $p$-bisimilar GAs $\TS$ and $\TS'$
and let $\rho$ be a System strategy for $\TS$.
A strategy $\rho'$ for $\TS$ is said to be a 
\emph{$p$-bisimilar transformation of $\rho$ to $\TS'$}, if 
there exists a $p$-bisimulation $R$
s.t.~for every history $\tau=q_0\cdots q_\ell$ of $\TS$ induced by $\rho$, 
there exists a history $\tau'=q'_0\cdots q'_\ell$ of 
$\TS'$ induced by $\rho'$
and a sequence of bijections $h_i:\adom(\I(q_i))\rightarrow\adom(\I(q'_i))$ $(i=0,\ldots,\ell)$,
s.t., for every $i\in[0,\ell]$, 
\myi $\tup{q_i,h_i,q'_i}\in R$ and 
\myii if 
$\tilde{\I}(q_i)\sim_{h_i} \tilde{\I'}(q'_i)$ and 
$\tilde{\I}(q_{i+1})\sim_{h_{i+1}} \tilde{\I'}(q'_{i+1})$
then there exists a bijection 
$b:\adom(\I(q_i))\cup \adom(\I(q_{i+1}))\rightarrow\adom(\I(q'_i))\cup\adom(\I(q'_{i+1}))$
s.t.~$h_i=b\mid_{\adom(\I(q_i))}$ and $h_{i+1}=b\mid_{\adom(\I(q_{i+1}))}$.
\end{definition}

\begin{theorem}\label{th:strat-transform}
If two GAs $\TS$ and $\TS'$ are $p$-bisimilar then there exists 
a System strategy $\strat$ on $\TS$ iff there exists
a System strategy $\strat'$ on $\TS'$ that is a $p$-bisimilar transformation of $\strat$.
\label{th:transformation}
\end{theorem}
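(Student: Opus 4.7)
The plan is to prove the forward direction by an inductive construction on the length of histories, using the transfer conditions of $p$-bisimulation to simulate each System or Target move; the reverse direction then follows by symmetry. Let $R$ be a $p$-bisimulation between $\TS$ and $\TS'$ relating their initial states (which exists by hypothesis), and fix a bijection $h_0:\adom(\I(q_0))\to\adom(\I'(q'_0))$ with $\tup{q_0,h_0,q'_0}\in R$. I build $\strat'$ and the required history-to-history correspondence simultaneously, maintaining the invariant that every history $\tau'=q'_0\cdots q'_\ell$ of $\TS'$ induced by $\strat'$ is paired with a history $\tau=q_0\cdots q_\ell$ of $\TS$ induced by $\strat$, together with bijections $h_0,\ldots,h_\ell$ satisfying the two conditions of Definition~\ref{def:p-bisim-transform}.

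For the inductive step, I examine the last state $q'_\ell$. Since $\pred{turnS}$ and $\pred{turnT}$ are $0$-ary fluents, they are unaffected by any object bijection, so the invariant $\tup{q_\ell,h_\ell,q'_\ell}\in R$ implies that $q_\ell$ and $q'_\ell$ agree on whose turn it is. If $q'_\ell\models\pred{turnS}$, I first compute $q_{\ell+1}:=\strat(q_0\cdots q_\ell)$ on the witness side; then condition~(\ref{bisim:req2}) of $p$-bisimulation yields a successor $q'_{\ell+1}$ of $q'_\ell$ and an extension bijection $b_\ell$ whose restriction $h_{\ell+1}$ to $\adom(\I'(q'_{\ell+1}))$ satisfies $\tup{q_{\ell+1},h_{\ell+1},q'_{\ell+1}}\in R$, and I set $\strat'(q'_0\cdots q'_\ell):=q'_{\ell+1}$. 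If instead $q'_\ell\models\pred{turnT}$, then for every Target successor $q'_{\ell+1}$ of $q'_\ell$ condition~(\ref{bisim:req3}) provides a matching $q_{\ell+1}$ in $\TS$ and an extension $h_{\ell+1}$; no commitment from $\strat'$ is needed here, and $\strat$ remains applicable on the extended witness history, so the invariant is preserved along every possible continuation of $\tau'$.

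The main obstacle is that $\strat'$ must be a well-defined function of the input history, while the construction above makes several choices: the matching Target successor in $\TS$ whenever Target branches in $\TS'$, the matching System successor in $\TS'$ offered by condition~(\ref{bisim:req2}), and the associated bijections used to extend $h_\ell$ to $h_{\ell+1}$. I would resolve this uniformly by appealing to the axiom of choice (equivalently, by fixing well-orderings of $\Delta$ and of the state spaces), committing once and for all to specific witnesses so that histories of $\TS'$ sharing a common prefix receive consistent witnesses in $\TS$ along that prefix. The resulting $\strat'$ satisfies Definition~\ref{def:p-bisim-transform} by construction. The reverse implication is obtained by the same argument after swapping $\TS$ and $\TS'$ and using the inverse $p$-bisimulation $R^{-1}$, which is itself a $p$-bisimulation because conditions~(\ref{bisim:req2}) and~(\ref{bisim:req3}) are symmetric in the two GAs.
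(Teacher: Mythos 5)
Your proof is correct and follows essentially the same route as the paper's: use the transfer conditions of the $p$-bisimulation to match induced histories of the two arenas step by step, and define $\strat'$ by reading off the last state of the matched history. You are in fact more explicit than the paper's two-line argument, which silently assumes both the history-matching and the well-definedness of $\strat'$ that you handle via induction on history length and a fixed choice of witnesses.
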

\begin{proof}
By $p$-bisimilarity, there exists a bisimulation $R$ 
s.t.~for every history $\tau=q_0\cdots q_\ell$ of $\TS$ induced by $\strat$,
there exists a history 
$\tau'=q'_0\cdots q'_\ell$ of $\TS'$ that fulfills the requirement 
of $\tau'$ in Def.~\ref{def:p-bisim-transform}. 
For the if-part, we define $\strat'$ as $\strat'(q'_0\cdots q'_{\ell-1})=q_\ell'$, 
for every history $q_0\cdots q_{\ell-1} q_\ell$ of $\TS$ induced by $\strat$, 
s.t.~$q_{\ell-1}\models\pred{turnS}$.
Only-if part is analogous.
\end{proof}

Theorem~\ref{th:strat-transform} provides us with a constructive way to 
transform a strategy executable on $\TS$ into one on $\TS'$. 
It essentially requires to transform the states of a history $\tau$ of $\TS$ 
into those of a history $\tau'$ of $\TS'$, by applying the isomorphisms
that, in $R$, associate the states of the two GAs, while preserving
the identity of the objects that persist and of those that have just 
disappeared from the active domain. 

For an example of this, assume to have computed a winning strategy on 
$\finite{\TS}$ and to want to execute a transformation of it on $\TS$.
Assume that $\TS$ has traversed the history $\tau=q_0\cdots q_\ell$.
From this, we obtain the corresponding $\finite{\TS}$ history 
$\finite{\tau}=\finite{q}_0\cdots \finite{q}_\ell$, compute the move 
$\finite{q}=\finite{\strat}(\finite{\tau})$, and then translate it back to 
a move for $\TS$, according to any isomorphism chosen as described in 
the theorem above.

%%A strategy is arbitrary for $q_0$, but we have shown that it is defined for every history $q_0q_1$ with $q_0\finite{\rightarrow}q_1$. Assume $\fstrat(q_0q_1)=\confS$. Likewise, since the transition relation of $\TS$ is effectively computable (Lemma~\ref{lemma:computable}), we can compute the corresponding history in $\TS$ by relying on the genericity property of these TSs and the fact that they are $p$-bisimilar. It is as follows: to replicate the transition $q_0\finite{\rightarrow}q_1$ in $\TS$ we select a state $q'_1$ so that $q_0\rightarrow q'_1$ and $q_1\sim_{h_1} q'_1$, where $h_1$ is an isomorphism which, informally, `updates' $h_0$ so to preserve the identity of objects that are in the active domain of $q_0$. Formally, $h_1|_{\adom(\I(q'_0))}$ must coincide with $h|_{\adom(\finite{\I}(q_0))}$ -- the existence of such $q'_1$ and $h_1$ is guaranteed by the $p$-bisimulation and genericity. Intuitively, this corresponds to select a successor state in $\TS$ which is isomorphic, modulo renaming, to the successor state selected by the strategy on $\finite{\TS}$. Indeed, recall that $\I$ and $\finite{\I}$ must preserve the values 
%\input{5-synthesis-backup}

\subsection{Controller synthesis}
In this section, we show how we can exploit $\mu\L_c$ model checking
to both compute a simulation between the target program and the available 
program, and to synthesize the corresponding controller responsible for orchestrating the available system.
Given the two programs $\progTinit$ and $\progSinit$, together with the corresponding theories $\BATT$ and $\BATS$, 
we now construct a GS $\TS$  \emph{induced by} these programs, i.e., that captures the crossproduct of their execution.

It is the GA $\TS = \tup{\Delta_{\TS},Q,q_0,\rightarrow,\I}$ built as follows:

\smallskip
\noindent
\textbf{Object domain}: $\Delta_{\TS}=\Delta\cup\set{\turnT,\turnS}\cup\Gamma_{\progTinit}\cup\Gamma_{\progSinit}\cup\Gamma_{\pedixS}$, where the latter set is the union of the syntactic closures of all programs $\prog_{\Act}(\xx)$ such that a mapping rule $\Act(\xx) \leftrightarrow \prog_{\Act}(\xx)$ exists;

\noindent
\smallskip
\textbf{States}: $Q \subseteq \set{\turnT,\turnS} \times \Confs{\progTinit}\times\Confs{\progSinit} \times \Confs{\bar{\progS}}$ is the set of states, where $\Confs{\bar{\progS}}$ is a special set of configurations, discussed below. Each $q = \tup{\turn,\tup{\progT,\xxT,\sitT},\tup{\progS,\xxS,\sitS},\tup{\bar{\progS},\bar{\xxS},\bar{\sitS}}}$ is such that
	$\turn\in\set{\turnT,\turnS}$ specifies the \emph{turn}, i.e., which program, target ($\turnT$) or available ($\turnS$), moves next.  
	$\tup{\progT,\xxT,\sitT}$ and $\tup{\progS,\xxS,\sitS}$ are, respectively, configurations of the target and available programs. Specifically, $\progT$ is the remaining fragment of the target program to execute, while $\progS$ is the program representing all remaining possible executions of the available system; moreover, $\xxT$ and $\xxS$ are the current environments for these programs, 
representing the current assignments of their pick variables, and finally $\sitT$ and $\sitS$ are the situations of $\BATT$ and $\BATS$ resulting, respectively, from the portion of $\progTinit$ and $\progSinit$ executed so far. 
Finally, the additional configuration $\tup{\bar{\progS},\bar{\xxS},\bar{\sitS}}\in \Gamma_{\pedixS} \times \Delta^{k_\pedixS} \times S$ represents the remaining fragment of the program which corresponds, through a mapping rule, to the last target action $\Act(\xx)$ being realised by the system: after each turn of the target program, this is precisely $\prog_{\Act}(\xx)$. As defined in the definition of the transition relation, its role is to make sure that the evolution of the system, represented by the sequence of evolving configurations in $\Confs{\progSinit}$, corresponds to a complete execution of $\prog_{\Act}(\xx)$.

\smallskip
\noindent
\textbf{Initial state}: $q_0 = \langle \turnT,\tup{\progTinit,\xxT^0,S^0_\pedixT}$, $\tup{\progSinit,\xxS^0,S^0_\pedixT}$, $\tup{ \True?,\xxS^0, S^0_\pedixS}\rangle$. Initially, it is the turn of the target program; the target program is $\progTinit$ with 
	initial assignment $\xxT^0$; the available program is $\progSinit$ with 
	initial assignment $\xxS^0$; and the two BATs are in their initial situations. Also, there is yet no target action to be replicated, and therefore no associated program -- $\True?$ is the empty program;

\smallskip
\noindent
\textbf{Transitions}: $\rightarrow \subseteq Q \times Q$ is the transition relation, s.t.~$Q$ and $\rightarrow$ are defined through mutual induction: $q_0\in Q$ and if $q\in Q$ we have that $q'\in Q$ for all $q\rightarrow q'$. 
A transition $\langle \turn,\tup{\progT,\xxT,\sitT}$,$\tup{\progS,\xxS,\sitS}$,$\tup{\bar{\progS},\bar{\xxS},\bar{\sitS}} \rangle$ $\rightarrow$ $\tup{\turn',\tup{\progT',\xxT',\sitT'},\tup{\progS',\xxS',\sitS'},\tup{\bar{\progS}',\bar{\xxS}',\bar{\sitS}'}}$ exists iff either it is the turn of the target program and a possible next target situation $\sitT' = \do(\Act(\xx),\sitT)$ is selected (that is, resulting from the execution of the action $\Act(\xx)$), or it is the turn of the available concurrent program and the target situation $\sitT'$ can be replicated by executing $\prog_{\Act}(\xx)$ in the system. We also need to make sure that the resulting situation $\sitS'$ is a situation in which the program $\prog_{\Act}$ corresponding to $\Act$ is final (for this, we use $\Do$). Since $\prog_{\Act}(\xx)$ is single-step, we use $\turn$ to establish a \emph{strict} alternation between $\progTinit$ and $\progSinit$.
Therefore, $\langle \turn,\tup{\progT,\xxT,\sitT}$,$\tup{\progS,\xxS,\sitS}$,$\tup{\bar{\progS},\bar{\xxS},\bar{\sitS}} \rangle$ $\rightarrow$ $\tup{\turn',\tup{\progT',\xxT',\sitT'},\tup{\progS',\xxS',\sitS'},\tup{\bar{\progS}',\bar{\xxS}',\bar{\sitS}'}}$ iff either:
\begin{itemize}
\item $\turn=\turnT \land \TransObs(\progT[\xxT],\sitT,\sitS,\progT'[\xxT'],\sitT') \land \exists \Act,\xx. ~\sitT'=\do(\Act(\xx),\sitT) \land \bar{\progS}' = \prog_{\Act} \land  \progS'=\progS \land \xxS'=\xxS \land \sitS'=\sitS \land \bar{\xxS}' = \bar{\xxS}[\xx] \land  \bar{\sitS}'=\bar{\sitS} \land \turn'=\turnS$;
\item $\turn=\turnS \land \Trans(\progS[\xxS],\sitS,\progS'[\xxS'],\sitS') \land \Trans(\bar{\progS}[\bar{\xxS}]$, $\bar{\sitS}, \bar{\progS}'[\bar{\xxS'}],\bar{\sitS}') \land \progT'=\progT \land \xxT'=\xxT \land \sitT'=\sitT  \land ( \Final(\bar{\progS}'[\bar{\xxS}'],\sitS') \land \turn'=\turnT  \lor \turn'=\turnS ).$
\end{itemize}

%\begin{itemize}
%	\item $\turn=\turnT \land \TransObs(\progT[\xxT],\sitT,\sitS,\progT'[\xxT'],\sitT') \land \exists \Act,\xx. ~\sitT'=\do(\Act(\xx),\sitT) \land %\progS'=\progS \land \xxS'=\xxS\land \sitS'=\sitS \land \turn'=\turnS$ or 
%	\item $\turn=\turnS$ $\land$ $\Trans(\progS[\xxS],\sitS,\progS'[\xxS'],\sitS') \land \exists \Act,\xx,\sitT''. ~\sitT=\do(\Act(\xx),\sitT'')  \land \progT'=\progT \land \xxT' = \xxT\land \sitT'=\sitT \land \turn'=\turnT \land \Do(\prog_{\Act}(\xx), \sitS,\sitS').$
%\end{itemize}	
The former case applies when it is the turn of the target program and a possible next target situation $\sitT' = \do(\Act(\xx),\sitT)$ is selected (corresponding to an action $\Act(\xx)$). The target configuration is progressed, while the system remains idle. The last configuration registers the program to execute, with free variables replaced by $\xx$ (denoted here by  $\bar{\xxS}[\xx]$) . The latter case applies when it is the turn of the available program, while the system remains idle. Note that we also progress the program corresponding to the last configuration $\bar{\confS}$, testing that the program remaining fragment of $\prog_{\Act}(\xx)$ may terminate, and in this case we allow the turn to be ``given back'' to the target program.  

\smallskip
\noindent
\textbf{Labelling}: 
$\I:Q\rightarrow \Int^{\Fluents}_{\Delta_{\TS}}$ where  
$\Fluents=\Fluents[T]\cup\Fluents[S]\cup\set{\pred{turn},\pred{progT},\pred{progS},\pred{finalT},\pred{finalS},
\pred{envT},\pred{envS}}$, for fluents in $\Fluents[T]$ and $\Fluents[S]$ with the situation argument suppressed.
Informally, we ``make visible'' the internal structure of the state through the labelling $\I$, so that we can evaluate $\mu\L_c$ formulas on $\TS$. Formally, $\I$ is as follows:

\myi Fluents in $\Fluents[T]$ and $\Fluents[S]$ are interpreted according to 
the interpretation provided by the model $M$ of $\BATT\cup\BATS$ (and the Situation Calculus and ConGolog axioms) at situations $\sitT$ and $\sitS$\footnote{$M$ is unique since the initial situation is fully specified.}. Specifically, for every $q\in Q$ and $\xx\in\vec{\Delta}$, and for every $\fluent\in\Fluents[S]$ 
we have $\fluent^{\I(q)}(\xx)$ iff $\fluent^{M}(\xx,\sitS)$. Similarly, for each fluent $\fluent\in\Fluents[T]$, 
	\begin{compactitem}
	\item if $\turn=\turnS$ then $\fluent^{\I(q)}(\xx)$ iff $\fluent^{M}(\xx,\sitT)$;
	\item if $\turn=\turnT$ then:
		\begin{compactitem}
		\item if $\neg\Obs(\fluent)$ then $\fluent^{\I(q)}(\xx)$ iff $\fluent^{M}(\xx,\sitT)$;
		\item otherwise $\fluent^{\I(q)}(\xx)$ iff $\varphi_{\fluent}^{M}(\xx,\sitS)$.
		\end{compactitem}
	\end{compactitem}
	Intuitively, we define the labelling function $\I$ based on the model $M$, making sure that, whenever $\turn=\turnT$, $\I$ is consistent with the mappings between $\BATT$ and $\BATS$ for all fluents representing observations of the available system.
	
\myii For the remaining fluents in $\Fluents$, assuming $q=\tup{\turn,\tup{\progT,\xxT,\sitT},\tup{\progS,\xxS,\sitS}}$, we have: $\pred{turn}^{\I(q)}=\set{\turn}$, $\pred{progT}^{\I(q)}=\set{\progT}$, $\pred{progS}^{\I(q)}=\set{\progS}$, $\pred{envT}^{\I(q)}=\set{\xxS}$, $\pred{envS}^{\I(q)}=\set{\xxS}$.
	Finally, the $0$-ary predicate $\pred{finalT}$ is true iff $\Final(\progT[\xxT],\sitT)$, and analogously for $\pred{finalS}$. 

\medskip
The GA above is essentially the tree of
executable (combinations of) configurations for $\progTinit$ and $\progSinit$, with the state-labelling providing an interpretation of the set of fluents $\Fluents$, used to verify $\mu\L_c$ formulas on $\TS$.  
Notice that labelings retain all the relevant information about states.
%% thus the internal structure of states
%%can be disregarded and $\TS$ can be treated as
%%a regular GA.

%%Since we encoded in the GA all the specific conditions that define all the possible ways in which the programs $\progTinit$ and $\progSinit$ may evolve according to their BATs and the mapping rules, including all the required bookkeeping, we can focus only on the labelling of states, treating $\TS$ as a general GA. 
%%%
Satisfaction of the following $\mu\L_c$ by $\TS$ implies 
the existence of a simulation between $\progTinit$ and $\progSinit$: 

\[
\Phi_{Sim} = \nu X.\mu Y.( (\phi_{\textsc{ok}} \land [-] X ) \lor{} (\pred{turnS} \land \langle-\rangle Y))
\]

\noindent
where $\phi_{\textsc{ok}}=(\pred{finalT} \mapsto \pred{finalS}) \land \pred{turnT}$. 
Intutively, $\phi_{\textsc{ok}}$ holds in those states in which it is the turn of the target and if the target program may terminate so can the available program. Therefore, the formula requires that no matter how the target program evolves to a new program $\progT'$ through the execution of an action $\Act(\xx)$ from a state in which $\phi_{\textsc{ok}}$ holds, from that successor state (where $\pred{turnS}$ holds) there exists a \emph{sequence} of transitions corresponding to a complete execution of $\prog_{\Act}(\xx)$, and from where the whole property still holds. 

\begin{theorem}
Let $\phiWin$ be the set of winning states in $\TS$ wrt $\Phi_{Sim}$. 
Then, $\progTinit$ is realizable by $\progSinit$ iff $q_0\in \phiWin$.
\label{th:realizable_infinite}
\end{theorem}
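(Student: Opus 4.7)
The plan is to establish the biconditional by showing that the simulation relation $\preceq$ corresponds exactly to the projection of $\phiWin$ onto target/system configurations. Concretely, define the relation
\[
R = \{ (\tup{\progT,\xxT,\sitT},\tup{\progS,\xxS,\sitS}) \mid q = \langle\turn,\tup{\progT,\xxT,\sitT},\tup{\progS,\xxS,\sitS},\tup{\True?,\bar\xxS,\sitS}\rangle \in \phiWin \text{ for some } \turn\}
\]
restricted to states where the auxiliary mapping program has been fully executed (so $\bar{\progS}'$ equals the trivial $\True?$), and show that $R$ coincides with $\preceq$ on the relevant configurations.

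For the direction ($\Leftarrow$): assume $q_0\in\phiWin$. I would prove that $R$ satisfies the simulation clauses in the definition of $\preceq$. The key observation is that the structure of $\Phi_{Sim} = \nu X.\mu Y.((\phi_{\textsc{ok}} \land [-] X) \lor (\pred{turnS}\land \langle-\rangle Y))$ exactly mirrors the universal/existential pattern in the simulation definition: the outer $\nu X$ corresponds to the coinductive clause of $\preceq$; the $[-]X$ on $\turnT$-states enforces that for every target move (which by the definition of $\rightarrow$ has the form $\TransObs(\progT[\xxT],\sitT,\sitS,\progT'[\xxT'],\do(\Act(\xx),\sitT))$ and sets $\bar\progS' = \prog_{\Act}$), the successor lies in $X$; and the inner $\mu Y$ on $\turnS$-states guarantees the existence of a \emph{finite} sequence of system transitions (a complete execution of $\prog_{\Act}(\xx)$) returning to a $\turnT$-state satisfying $\phi_{\textsc{ok}}$, which by the construction of $\rightarrow$ means $\Do(\prog_{\Act}(\xx),\sitS,\sitS')$ holds. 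Finally, $\phi_{\textsc{ok}}$ enforces $\Final(\progT[\xxT],\sitT) \supset \Final(\progS[\xxS],\sitS)$, and the agreement of fluent values required by the simulation follows from the labelling function $\I$, which for $\turnT$-states interprets observation fluents $\fluent\in\Fluents[T]$ through $\varphi_\fluent$ on $\sitS$.

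For the direction ($\Rightarrow$): assume $(\progTinit,S^0_T)\preceq(\progSinit,S^0_S)$. I would exhibit, by coinduction, a set $W\subseteq Q$ of states containing $q_0$ that is a post-fixpoint of the monotone operator
\[
F(X) = \mathit{lfp}\, Y.\,((\ext{\phi_{\textsc{ok}}}\cap PreA(X)) \cup (\ext{\pred{turnS}}\cap PreE(Y)))
\]
so that $W\subseteq \phiWin$ by the Knaster--Tarski characterisation of the greatest fixpoint. The set $W$ consists of those states whose target/system configuration pair lies in a (witness) simulation, together with the intermediate $\turnS$-states generated along executions of the mapping programs $\prog_{\Act}(\xx)$. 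The definition of $\preceq$ directly supplies, for each target move from a winning $\turnT$-state, the required finite sequence of system transitions, which yields membership of the intermediate $\turnS$-states in the $Y$-approximants, hence in $F(W)$.

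The main obstacle I expect is bookkeeping the auxiliary component $\tup{\bar\progS,\bar\xxS,\bar\sitS}$: it is introduced at each target move as $\prog_{\Act}(\xx)$ and must be driven to a $\Final$ state before turn can revert to $\turnT$. One has to argue that this auxiliary component is fully determined by the trace of the game and therefore does not affect the correspondence with $\preceq$, which is defined purely on target/system configurations. A secondary subtlety is the correct handling of the labelling $\I$ in $\turnT$-states, where target observation fluents are evaluated on $\sitS$ rather than $\sitT$; this is precisely what makes $\phi_{\textsc{ok}}$ and the $\FinalObs$-based $\TransObs$ agree, so a short lemma reconciling these two definitions on $\turnT$-states of $\TS$ completes the argument.
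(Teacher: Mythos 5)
Your proposal is correct and follows essentially the same route as the paper's (much terser) proof sketch: you identify the winning set $\phiWin$ with the simulation relation by projecting winning $\turnT$-states onto target/system configuration pairs, reading the $\nu$/$\mu$ alternation of $\Phi_{Sim}$ as the universal-target-move/finite-system-response pattern of $\preceq$, and using the labelling $\I$ and $\phi_{\textsc{ok}}$ for the fluent-mapping and $\Final$ clauses. Your explicit Knaster--Tarski post-fixpoint argument for the converse direction and your flagged bookkeeping of the auxiliary component $\tup{\bar{\progS},\bar{\xxS},\bar{\sitS}}$ (which need only be $\Final$, not literally $\True?$, when the turn reverts) simply make precise what the paper leaves as ``analogous reasoning.''
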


\begin{proof} (Sketch.) 
It follows from the fact that for each state $\tup{\turnT,\tup{\progT,\xxT,\sitT},\tup{\progS,\xxS,\sitS},\bar{\confS}}\in \phiWin$ we have $(\progT[\xxT],\sitT)\preceq (\progS[\xxS],\sitS)$. It is immediate to see that if a state $q\in\phiWin$ then $q\models\phi_{\textsc{ok}}$ and that all fluent mappings are satisfied by definition of $\I$ in $\TS$. Also, since any possible target action $\Act(\xx)$ is captured by a successor of such $q$, it follows that there exists a path in $\TS$ corresponding to $\prog_{\Act}(\xx)$, and that the resulting state is in $\Win$. With analogous reasoning, one can see that opposite holds as well, as it implies $(\progTinit[\xxT^0],S_\pedixT^0)\preceq (\progSinit[\xxS^0],S_\pedixS^0)$.
\end{proof}

%\begin{definition}[Winning strategy]
%Given $\TS$, a strategy $\strat$ is said to be \emph{winning} wrt $\Phi_{Sim}$ if for every history $\tau=q_0\cdots q_\ell$ induced by $\strat$ we have that for all $i\in[0,\ell]$,  if $q_i\models\pred{turnT}$ then $q_i\in\phiWin$.
%\end{definition}

Although the fixpoint computation described earlier gives us a way of capturing the winning set $\phiWin$, the number of approximants that we need to compute is bounded (by the size of the GA) only if $\TS$ is finite. Since, in our case, $\TS$ can be infinite, the fixpoint cannot be computed, in general.

\begin{lemma}
$\TS$ as above is generic.
\end{lemma}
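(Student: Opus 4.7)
(Sketch.)
The plan is to exploit the fact that every ingredient used to build $\TS$ -- the basic action theories $\BATT$ and $\BATS$, the precondition and successor state axioms, the $\Trans$/$\Final$ clauses of \ConGolog (and our synchronized variant), the simulation-based mapping rules $\prog_{\Act}(\xx)$ and $\varphi_\fluent(\xx)$, and the program-to-configuration decomposition of \cite{DeGiacomoLPS16} -- is defined in first-order logic (with fixpoints for iteration) over the object sort. Consequently, all of these definitions are invariant under permutations of data objects, and hence transitions commute with renamings.

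Concretely, I would proceed as follows. First, I would observe that a bijection $h:\Delta_{\TS}\to\Delta_{\TS}$ relevant here must fix the syntactic components of $\Delta_{\TS}$, namely $\set{\turnT,\turnS}$ and the syntactic closures $\Gamma_{\progTinit},\Gamma_{\progSinit},\Gamma_{\pedixS}$, since these occur as values of the auxiliary fluents $\pred{turn}, \pred{progT}, \pred{progS}$, etc., and any isomorphism $\I(q_1)\sim_h\I(q_2)$ must preserve these singleton extensions. Thus $h$ reduces to a bijection on the true data domain $\Delta$. Second, I would lift $h$ to situations, environments, and remaining program fragments by pointwise renaming of all object arguments occurring in the action terms of situations and in the pick-variable assignments; call this lifted map $\hat h$. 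Because $\BATT,\BATS$ and the standard axioms of the situation calculus are pure FO sentences over the object sort, and because the model $M$ is generated from the fully-specified initial situation via these axioms, one has $\fluent^M(\xx,\sigma)$ iff $\fluent^M(h(\xx),\hat h(\sigma))$ for every fluent $\fluent\in\Fluents[T]\cup\Fluents[S]$ and every situation $\sigma$ (and analogously for the observation formulas $\varphi_\fluent$ used in the target labelling).

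Now suppose $\I(q_1)\sim_h\I(q_2)$ and $q_1\to q_1'$. By the definition of $\to$, either Target moved (via $\TransObs$ and a witness $\Act(\xx)$) or System moved (via $\Trans$ on $\progS$ and $\bar\progS$, together with a $\Final$ check on $\bar\progS'$ used to hand the turn back). In either case, the transition predicates $\Trans,\TransObs,\Final,\FinalObs,\Poss,\Do$ are all first-order definable over the fluents and the program structures, with pick variables ranging over $\Delta$. Hence applying $\hat h$ to the witnessing action, pick assignments, and resulting situations yields corresponding witnesses showing that $q_2\to q_2'$, where $q_2'$ is obtained from $q_1'$ by componentwise renaming via $\hat h$. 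Finally, by the invariance of the labelling under $\hat h$ noted above (for both ``live'' and observed target fluents, which depend on $\sitT$ or $\sitS$ as prescribed by the two cases in the definition of $\I$), we get $\I(q_1')\sim_h\I(q_2')$, which is the required condition.

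The only subtle step is the bookkeeping on pick-variable environments and the auxiliary configuration $\tup{\bar\progS,\bar\xxS,\bar\sitS}$: one has to check that $\hat h$ commutes with the substitution $\prog[\xx]$ used to instantiate a program counter with its environment, and with the rule $\bar\xxS' = \bar\xxS[\xx]$ that records the freshly picked parameters at the Target step. This is immediate because $\prog[\xx]$ is pure syntactic substitution of objects for free variables, which manifestly commutes with any bijective renaming of objects. Once this is in place, genericity of $\TS$ in the sense of Definition~\ref{def:generic} follows.
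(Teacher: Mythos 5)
Your proof is correct and follows essentially the same route as the paper's (very terse) sketch: the paper argues genericity as a direct consequence of $\TS$ being defined from two BATs with an FO-specified, one-step transition relation, which is precisely the invariance-under-object-renaming argument you unpack. Your version simply makes explicit the details the paper leaves implicit (lifting $h$ to situations and environments, the labelling invariance, and commutation with pick-variable substitution), all of which are consistent with the intended argument.
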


\begin{proof}(Sketch)
The result is a direct consequence of the fact that $\TS$ is defined 
on two BATs and the transition relation is defined by 
a FO specification involving only the current state and the next one.  
\end{proof}

Thus, by Theorem~\ref{th:abstraction}, if $\TS$ is state-bounded, 
there exists a finite GA $\finite{\TS}$ which we can use to verify $\phiWin$
(instead of using the infinite-state $\TS$). 
Importantly, $\finite{\TS}$ is effectively computable.

\begin{lemma}
If $\TS$ is state-bounded, then $\finite{\TS}$ is effectively computable.
\label{lemma:computable}
\end{lemma}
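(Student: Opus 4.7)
The plan is to reduce to the procedure from \cite{CalvaneseDMP18} recalled just before Theorem~\ref{th:abstraction}, which effectively computes $\finite{\TS}$ provided that: \myi the transition relation $\rightarrow$ is computable and \myii the existence of an isomorphism between two states is decidable. Since state-boundedness is assumed, it suffices to verify these two conditions on the specific $\TS$ induced by $\progTinit$ and $\progSinit$.

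First, I would establish computability of $\rightarrow$. By construction, checking whether $q\rightarrow q'$ reduces to one of the two disjuncts in the definition of the transition relation, each a finite conjunction of conditions on the turn, the program counters, the environments, and on ConGolog predicates ($\Trans$, $\TransObs$, $\Do$, $\Final$) together with precondition tests via $\Poss$. The program counter components live in the \emph{finite} syntactic closures $\Gamma_{\progTinit}$, $\Gamma_{\progSinit}$, $\Gamma_{\pedixS}$, so they can be enumerated; environments are tuples of objects of bounded arity. Since programs are recursion-free, the ConGolog predicates finitely unfold into FO queries over $\BATT$ and $\BATS$, which under the boundedness assumption can be regressed to queries over the (fully specified) initial situations and then effectively evaluated.

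Next, I would argue decidability of state isomorphism. By state-boundedness, $\Card{\adom(\I(q))}\le b$ uniformly. Hence for any $q_1,q_2\in Q$ the candidate bijections $h:\adom(\I(q_1))\rightarrow\adom(\I(q_2))$ are at most $b!$ in number, and each test $\I(q_1)\sim_h \I(q_2)$ amounts to comparing the finite extensions of finitely many fluents under $h$, together with syntactic equality of the program counter, environment and turn components encoded in $\I$. All of these checks are decidable, so the overall isomorphism test is as well.

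The main obstacle is bridging the gap between the syntactically unbounded situation terms $\sitT,\sitS$ carried in configurations and the bounded FO interpretations used by $\I$: one must ensure that, although situations may grow arbitrarily, the fluent extensions remain bounded and effectively computable. This is precisely where the assumption that the action theories are bounded enters: it guarantees that at every reachable situation the interpretation of each fluent has bounded active domain and can be computed by regression to the initial situation. With that in hand, both \myi and \myii hold, the procedure of \cite{CalvaneseDMP18} applies, and the resulting $\finite{\TS}=\tup{\finite{\Delta},\finite{Q},q_0,\finite{\rightarrow},\finite{\I}}$ is effectively computable.
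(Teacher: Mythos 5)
Your proposal is correct and follows essentially the same route as the paper's own (much terser) sketch: both reduce to the procedure of \cite{CalvaneseDMP18} (Theorem~17 therein) and observe that, under state-boundedness, the successor states of $\TS$ are effectively computable. You additionally spell out the decidability of the isomorphism test and the role of regression over the fully specified initial situations, which the paper leaves implicit but which is consistent with its argument.
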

\begin{proof}(Sketch)
Consequence of Theorem~17 of~\cite{CalvaneseDMP18} and the fact that, for a given state, if 
$\TS$ is state-bounded,  the successor state is computable.
\end{proof}

By applying our general technique for $\PhiGA$ formulas, we can  compute a winning strategy from $\phiWin$. While this strategy is not directly executable on $\TS$,
we can exploit the notion of strategy transformation introduced earlier.
Therefore, a concrete inductive procedure for \emph{executing} the controller corresponding to the strategy for $\TS$ is provided in the proof of Theorem~\ref{th:transformation}, which we can  directly execute. %This corresponds to the notion of delegator in \cite{SardinaG09}.

Finally, as a winning strategy $\strat$ as above is given, we can directly obtain the corresponding controller for $\progTinit$ and $\progSinit$, as explained in Definition~\ref{def:controller}, as follows. For every history $\tau=q_0 \cdots q_\ell q_{\ell+1}$, with $q_\ell\models\pred{turnT}$ (and thus $q_{\ell+1}\models\pred{turnS}$), for $q_i=\tup{\turn_i,{\confT}_i,{\confS}_i,\bar{\confS}_i}$, we return the sequence $\rho(\tup{{\confT}_\ell,{\confS}_\ell,{\confT}_{\ell+1}})=\confS^0\cdots \confS^m$ with $\confS^i$ is the system configuration of each state $q^i=\strat(\tau q^0\cdots q^i)$, for $i\in[0,m-1]$. %returned by the repeated application of $\strat(\tau)$, $\strat(\tau q^0)$, $\strat(\tau q^0q^1)$, $\cdots$, up to $\strat(\tau q^0\cdots q^{m-1})$. 
%%
%for every $\tau = q_0\cdots q_\ell$, with $q_{\ell-1} = \tup{\turnT,{\confT}_{\ell-1},{\confS}_{\ell-1},\cdot}$ and $q_\ell = \tup{\turnS,{\confT}_\ell,{\confS}_{\ell-1},\cdot}$. If $\strat(\tau)=\tup{\turnT,{\confT}_{\ell+1},{\confS}_{\ell+1},\cdot}$ then, by observing that $q_\ell\models\pred{\turnS}$ and thus that ${\confT}_{\ell+1}={\confT}_{\ell}$, we define $\contr({\confT}_{\ell-1},{\confS}_\ell,{\confT}_\ell)={\confS}_{\ell+1}$.

\section{Conclusions}

%In this paper, by exploiting recent results on Situation Calculus, we
%have been able to effectively synthesize controllers for manufacturing
%as a service scenarios, under assumption of state-boundedness.
%% More generally, the Notice that in this analysis we have left the possibility of using
%% numerical values coming from continuous domains;
%In fact, we are only scratching the surface of what KR formalisms like
%Situation Calculus can bring about into this arising manufacturing paradigm.  For instance, it would be of interest to
%equip devices with autonomous deliberation capabilities to react to
%exogenous events while in execution, or monitoring streaming of production data, 
%or explicit treatment of time and other continuous value quantities.
%We leave these to further studies.

In this paper, by exploiting recent results on the Situation Calculus, we
have been able to effectively synthesize controllers for 
manufacturing-as-a-service scenarios, under the assumption of state-boundedness.
However, we have only scratched the surface of what KR formalisms like the
Situation Calculus can bring to this new manufacturing paradigm.  
For instance, it would be of interest to
equip resources with autonomous deliberation capabilities \cite{Baldwin:89a}, e.g., to react to
exogenous events during execution, or to monitor streaming production data \cite{Lee//:15a}, 
or consider the explicit treatment of time and other continuous value quantities \cite{Behandish//:18a}. 
We leave these to further work.
\clearpage

\bibliography{synthesis,manufacturing}
\bibliographystyle{aaai}

\end{document}